\theoremstyle{thmstyleone}%
\newtheorem{theorem}{Theorem}%
\newtheorem{corollary}{Corollary}
\newtheorem{proposition}[theorem]{Proposition}%
\theoremstyle{thmstyletwo}%
\theoremstyle{thmstylethree}%
\newtheorem{definition}{Definition}%
\title{Learning from Imperfect Data: Robust Inference of Dynamic Systems using Simulation-based Generative Model}
\author{
    Hyunwoo Cho\textsuperscript{\rm 1}\thanks{First author: H. Cho (chw51@postech.ac.kr)}, 
    Hyeontae Jo\textsuperscript{\rm 2,3}\thanks{Corresponding authors: Email: H. Jo (korea\_htj@korea.ac.kr), H. J. Hwang (hjhwang@postech.ac.kr)}, Hyung Ju Hwang\textsuperscript{\rm 1,4}\footnotemark[2]
}
\begin{document}

\maketitle

\begin{abstract}
System inference for nonlinear dynamic models, represented by ordinary differential equations (ODEs), remains a significant challenge in many fields, particularly when the data are noisy, sparse, or partially observable. In this paper, we propose a Simulation-based Generative Model for Imperfect Data (SiGMoID) that enables precise and robust inference for dynamic systems. The proposed approach integrates two key methods: (1) physics-informed neural networks with hyper-networks that constructs an ODE solver, and (2) Wasserstein generative adversarial networks that estimates ODE parameters by effectively capturing noisy data distributions. We demonstrate that SiGMoID quantifies data noise, estimates system parameters, and infers unobserved system components. Its effectiveness is validated validated through realistic experimental examples, showcasing its broad applicability in various domains, from scientific research to engineered systems, and enabling the discovery of full system dynamics.
\end{abstract}

\begin{links}
    \link{Code}{https://github.com/CHWmath/SiGMoID}
\end{links}

\section{Introduction}

Many scientific fields, such as gene regulation \cite{hirata2002oscillatory, jo2024density}, biological rhythms \cite{forger2024biological}, disease transmission \cite{smith2018influenza, jung2020real, hong2024overcoming}, and ecology \cite{busenberg2012differential}, require the investigation of complex behaviors of $d_y$ system components $\{y_i(t)|i=1,\dots,d_y\}$, over time $t$. The temporal interactions among $y_i(t)$ can be modeled using ordinary differential equations (ODEs) governed by a system function $\mathbf{f}\in\mathbb{R}^{d_y}\times\mathbb{R}^{d_p}\rightarrow\mathbb{R}^{d_y}$, as follows:
\begin{equation}\label{general_ode}
\frac{d\mathbf{y}(t)}{dt}=\mathbf{f}(\mathbf{y}(t),\mathbf{p}),\quad t\in[0,T], 
\end{equation}
where $\mathbf{y}=(y_1,y_2,\dots,y_{d_y})$ denotes a vector comprising the system components and $\mathbf{p}\in\mathbb{R}^{d_p}$ denotes the $d_p$-dimensional vector comprising the model parameters to be estimated based on observed (or experimental) data $\mathbf{y}^{o}(t) = (y^{o}_{1}(t),\dots, y^{o}_{d_y}(t))$ at $N_{o}$ observation time points, $t\in\{t_{1},\dots,t_{N_{o}}\}\subset [0,T]$.

Recent advances in experimental data acquisition have greatly enhanced the ability to monitor dynamic systems, enabling more accurate fitting of solutions to observed data $\mathbf{y}^{o}(t)$. However, these observations are typically collected at discrete time points and are subject to measurement noise. Accordingly, we model the observed data as:
\begin{equation}\label{data_noise}
\mathbf{y}^{o}(t)=\mathbf{y}(t)+\mathbf{e}(t),
\end{equation}
where $\mathbf{e}(t)$ represents the measurement error governed by the noise level, $\sigma$. Additional challenges persist when some elements of the system are difficult to observe due to limitations in measurement resolution \cite{hirata2002oscillatory, smith2018influenza, jo2024density, hong2024overcoming}. In order to address the aforementioned challenges, we classify imperfect datasets, $\mathbf{y}^{o}(t)$, into two distinct types to capture their characteristics effectively:
\begin{itemize}
\item \textbf{Noisy and Sparse (NS):} $i^{th}$ component, $y^{o}_{i}$, is observable, for all $i\in\{1,\dots,d_y\}$, but observations are noisy and recorded at sparse time points (i.e., $\mathbf{y}^{o}=\mathbf{y}+\mathbf{e}$).
\item \textbf{NS with Missing Components (NSMC):} A subset of system components $\{y^{o}_{i}(t),\text{ for some }i\in\{1,\dots,d_y\}\}$ in the NS data is unobservable (See example in \cref{fig1}(a)). We also denote by $S_o$ the set of observable state indices.
\end{itemize}
\begin{figure*}[h!]
\begin{center}
\centerline{\includegraphics[width=2\columnwidth]{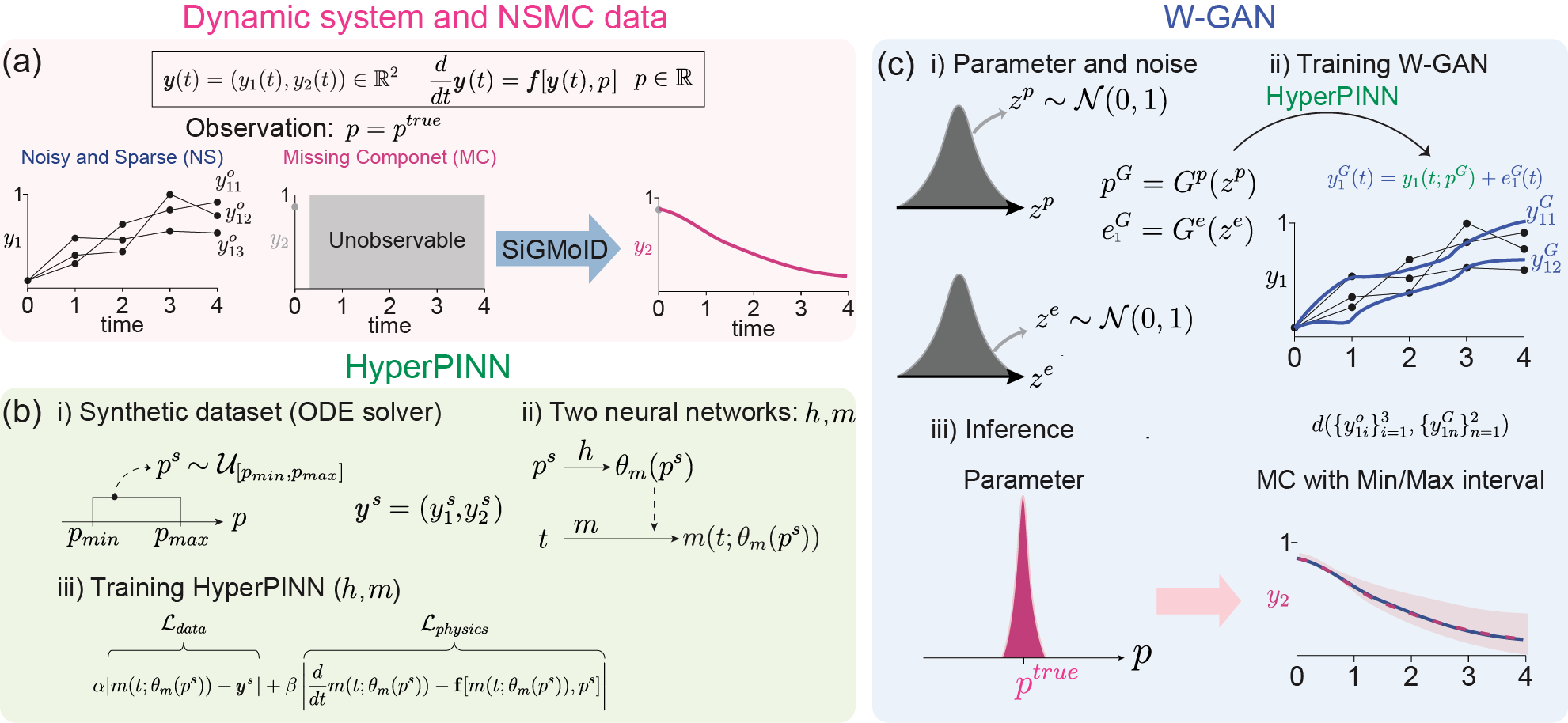}}
\caption{\textbf{Graphical illustration of the functioning of SiGMoID on NSMC data.} (a) ODE system with observable and missing components (b) HyperPINN training from simulated parameter-solution pairs (c) Parameter estimation and missing data recovery using W-GAN}
\label{fig1}
\end{center}
\end{figure*}

To address the challenges associated with NS and NSMC data, we propose a Simulation-based Generative Model for Imperfect Data (SiGMoID). This method utilizes the following two deep-learning models: 1) physics informed neural networks with hyper-networks (HyperPINN), which directly provides solutions of \cref{general_ode} corresponding to any predetermined set of parameters $\mathbf{p}$ (\cref{fig1}(b), and 2) Wasserstein generative adversarial betworks (W-GAN), which adjust the priors by matching the corresponding solutions to imperfect data (\cref{fig1}(c)). By leveraging these two types of models, SiGMoID quantifies the observation noise $\mathbf{e}$ accurately, enabling precise inference of both the underlying system parameters $\mathbf{p}$ and the missing components. 

We also evaluate the efficiency of SiGMoID on four distinct types of dynamic systems: 1) FitzHugh–Nagumo (FN) \cite{fitzhugh1961impulses}, 2) Protein transduction \cite{vyshemirsky2008bayesian}, 3) Gene regulatory networks (\textit{Hes1}) \cite{hirata2002oscillatory}, and 4) Lorenz system \cite{lorenz2017deterministic,stepaniants2024discovering}. Our results reveal the following contributions of SiGMoID:

\textbf{Improved parameter estimation on NS Data:} SiGMoID exhibits superior parameter estimation accuracy compared to existing methods on NS data.

\textbf{Enhanced prediction of full dynamics on NSMC Data:} SiGMoID accurately infers unobserved components on NSMC data, overcoming the limitations of current methods that fail to capture these dynamics completely.

\textbf{Advancing system inference using deep learning-based ODE solvers:} The deep learning-based ODE solver utilized in SiGMoID exhibits potential scalability beyond NS and NSMC datasets, making it applicable to a wide range of datasets.

Therefore, we anticipate that SiGMoID will be broadly applicable across domains ranging from scientific research to engineered systems, facilitating comprehensive analyses of system dynamics and offering robust solutions to challenges involving noisy, sparse, or partially observed data.

\section{Related works}
\subsection{Inference of dynamic systems}
Inference for dynamic systems has been extensively studied in various fields, and numerous methods have been developed to address the challenges posed by NS data. Among these, the penalized likelihood approach \cite{ramsay2007parameter} offers a ODE solution method that bypasses the need for numerical computation. It applies $B$-spline bases to smooth data, while simultaneously incorporating penalties for deviations from the ODE system. Although effective in many scenarios, this method often requires significant manual tuning, particularly for systems with unobserved components, which diminishes their scalability and ease of implementation.

Another line of research employs Gaussian processes (GPs), which provide a flexible and analytically tractable framework for representing system states, derivatives, and observations, thereby enabling parameter inference \cite{calderhead2008accelerating, dondelinger2013ode, wenk2019fast, yang2021inference}. In this context, \cite{dondelinger2013ode} introduced adaptive gradient matching (AGM), which utilizes GP to approximate gradients and fits them to the gradients defined by the ODE system. Further, \cite{wenk2019fast} developed fast GP-based gradient matching (FGPGM) which effectively reduces computational cost by optimizing the GP framework. More recently, \cite{yang2021inference} introduces manifold-constrained Gaussian process inference (MAGI). MAGI explicitly incorporates the ODE structure into the GP model by conditioning the GP’s derivatives on the constraints defined by the ODEs, effectively fitting the nature of the stochastic process to the deterministic dynamics of the system. This approach avoids numerical integration entirely, making it computationally efficient and providing a principled Bayesian framework that ensures consistency between the GP and ODE models. Therefore, to validate the efficiency of the SiGMoID method proposed in this study, we perform a comprehensive comparison with the aforementioned models in terms of their prediction performance on three different examples involving NS and NSMC data.

More recently, deep generative models, such as generative adversarial networks (GANs) \cite{goodfellow2014generative}, have been proposed to imitate data based on dynamic systems. For instance, \cite{kadeethum2021framework, patel2022solution} utilized a GAN to determine the joint probability distribution of parameters and solutions of \cref{general_ode}. Subsequently, \cite{kadeethum2021framework} modified the conditional GAN \cite{mirza2014conditional} to infer parameters based on real data samples. However, GAN-based methods are not explicitly designed to reconstruct unobservable system components, leading to inaccuracies in system inference. Thus, a modification of the GAN architecture and a novel framework that can pre-learn the dynamics systems are required.

\subsection{Simulation-Based Inference of dynamic systems}
Similar to the framework proposed in this study, simulation-based inference (SBI) has been developed to infer the joint probability distributions of solutions and corresponding parameters of \cref{general_ode}, $\pi(\mathbf{y},\mathbf{p})$, by incorporating both deep generative models and simulators based on dynamic systems and their underlying parameters, such as numerical DE solvers  \cite{ramesh2022gatsbi,gloeckler2024all}, with the aim of inferring the true underlying system parameters based on the observed data. For instance, \cite{ramesh2022gatsbi} successfully applied GANs to estimate the distribution of system parameters. More recently, \cite{gloeckler2024all} proposed a model that integrates both transformers and denoising diffusion implicit models to predict posterior distributions on sparse datasets. 

However, these methods have not yet been applied to NSMC datasets. To address this research desideratum, we propose SiGMoID by modifying the SBI strategy. The proposed framework handles NSMC data by combining an ODE solver capable of learning system dynamics with a W-GAN specifically designed to generate noisy data.

\section{Methods}
We develop a Simulation-based Generative Model for Imperfect Data (SiGMoID), which captures system parameters $\mathbf{p}$ in \cref{general_ode} and noise level $\mathbf{e}$ in \cref{data_noise}. The detailed procedure is provided in Methodological Details section of the Appendix, along with pseudo-algorithms (\cref{alg:SiGMoiD}). For better understanding, we present an overview of the key concepts and provide a graphical illustration of SiGMoID on NSMC data in \cref{fig1}, assuming $d_y = 2$, $d_p = 1$, and $N_s = 3$ observed time series in \cref{general_ode} for simplicity. Under this assumption, $\mathbf{y}(t;p^{true})=\mathbf{y}(t) = (y_1(t), y_2(t)) \in \mathbb{R}^2$ denotes the solution of \cref{general_ode} over time $t$ with a true system parameter $p^{true} \in \mathbb{R}$ (\cref{fig1}(a)). Three NS data corresponding to $y_1$, $\{y_{1i}^{o}\}_{i=1}^{3}$, are given based on observations (NS, blue) at time points $t_j=j-1$, for $j=1,2,3,4,5$. However, the data corresponding to $y_{2}$ are not observable (that is, MC). To address this, the SiGMoID framework is utilized to infer the noise distribution of $y_{1}$, estimate the true parameter $p^{true}$, and reconstruct the underlying true solution $y_2$.

In SiGMoID framework, we first set the parameter boundary, $[p_{\text{min}}, p_{\text{max}}]$ based on the empirically determined feasible range of true parameter values (\cref{fig1}(b)-i)). A random parameter value $p^s$ is then sampled from the uniform distribution $\mathcal{U}_{[p_{\text{min}}, p_{\text{max}}]}$. Using a numerical solver, the solution $\mathbf{y}^s(t)=\mathbf{y}(t;p^s)$ of the system considered in (\cref{fig1}(a)) is computed with $p^s$. Subsequently, the pair $(p^s, \mathbf{y}^{s})$ is used to train the HyperPINN framework, which consists of two neural networks---a hypernetwork $h$ and a main network $m$ (\cref{fig1}(b)-ii)). The hypernetwork $h$ takes the sampled parameter $p^s$ as its input and generates weights and biases $\theta_m(p^s)$ for the main network $m$. The main network $m^s(t)=m(t; \theta_m(p^s))$ produce the solution $\mathbf{y}^s(t)$, for an arbitrary time $t\in[0,T]$. For given time collocation points $\{t_{j}^{c}\}_{j=1}^{T_{col}}$, the networks $h$ and $m$ are trained jointly by minimizing the weighted sum of data loss $\mathcal{L}_{data}$ and physics loss $\mathcal{L}_{physics}$ with distinct weights $\alpha,\beta>0$ (\cref{fig1}(b)-iii)):
\begin{align*}
 \mathcal{L}_{data} & = \sum_{j=1}^{T_{col}} \left\Vert m^s(t^c_{j}) - \mathbf{y}(t^c_{j};p^s)\right\Vert^2, \\
 \mathcal{L}_{physics} & = \sum_{j=1}^{T_{col}}  
\Bigg\Vert \frac{d}{dt} m^s(t^c_{j}) - \mathbf{f}\big[m^s(t^c_{j}), p^s \big] \Bigg\Vert^2,\\
 \mathcal{L} &= \alpha\mathcal{L}_{data}+ \beta \mathcal{L}_{physics},
 \end{align*}
where $\|\mathbf{y}\|^2$ denotes $L^2$ norm $\|\mathbf{y}\|^2=y_1^2+\cdots+y_{d_y}^2$.

For W-GAN step (\cref{fig1}(c)), latent variables $z^p$ and $z^e$ are sampled from the standard normal distribution, $\mathcal{N}(0, 1)$ (\cref{fig1}(c)-i)). These serve as inputs to the two generators $G^p$ and $G^e$, to yield a candidate system parameter $p^G$ and data noise $e_{1}^G$ for $\{y_{1i}^{o}\}_{i=1}^{3}$, respectively. Using the HyperPINN trained in (b)-ii), the solution $y_{1}^G=m_{1}^G+e_1^G$ corresponding to $y_{1}$ is then obtained (\cref{fig1}(c)-ii)). By repeating this procedure $N$ times, each time sampling random parameters and noise values (\cref{fig1}(c)-i), we obtain $N$ samples $\{(p_{n}^{G},y_{1n}^{G})\}_{n=1}^{N}$ from the generator. In this work, for simplicity, we take $N=2$. The discrepancy between $\{y_{1i}^o\}_{i=1}^{3}$ and $\{y_{1n}^{G}\}_{n=1}^{2}$ is measured in terms of the Wasserstein distance \begin{gather*} d(\{y_{1i}^o\}_{i=1}^{3},\{y_{1n}^{G}\}_{n=1}^{2}) \\ = 
 \sup_{\phi \in \text{Lip}_{1}}\mathbb{E}_{\mathbf{y}^o}[\phi(\{y_{1i}^o\}_{i=1}^3)]-\mathbb{E}_{\mathbf{y}^G}[\phi(\{y_{1n}^{G}\}_{n=1}^{2})], \end{gather*} where $\{y_{1i}^o\}_{i=1}^3\sim\mathbf{y}^o$ and $\{y_{1n}^{G}\}_{n=1}^{2}\sim\mathbf{y}^G$ denote distributions of observed and generated outputs, respectively. The W-GAN is trained by minimizing this objective. Subsequently, the parameter values $\{p_{n}^G\}_{n=1}^{2}$ generated by $G^p$ are expected to converge within a narrow region containing the true system parameter $p^{true}$ (\cref{fig1}(c)-iii), (left)). Using the parameter distribution, the numerical solver is reapplied to generate MC data for $y_2$ (right). A general description of this procedure is provided in Methodological Details in Appendix, and detailed descriptions of HyperPINN and GAN, including stability measures, hyperparameters, and architecture configurations, are presented in \cref{table:network settings:hyperpinn} to \cref{table:network settings:wgan}.

\section{Experiments}
We apply SiGMoID to four real-world problems, requiring inference of system parameters and reconstruction of MC datasets: 1) the FitzHugh–Nagumo (FN) model \cite{fitzhugh1961impulses}, 2) the protein transduction model \cite{vyshemirsky2008bayesian}, 3) the \textit{Hes1} model \cite{hirata2002oscillatory}, and 4) Lorenz system \cite{lorenz2017deterministic}. For each example, the NS and NSMC datasets are generated using a numerical solver (explicit Runge-Kutta of order four) based on the simulation configurations obtained from the references. In particular, the simulations adopt the parameter configurations and initial conditions established in previous studies. The results of this evaluation demonstrate that SiGMoID not only outperforms existing methods \citep{yang2021inference,stepaniants2024discovering}, but also excels at reconstructing unobserved components. The computational costs for all cases are summarized in \cref{table:runtime} in the Appendix.
\begin{figure}[t!]
\begin{center}
\centerline{\includegraphics[width=\columnwidth]{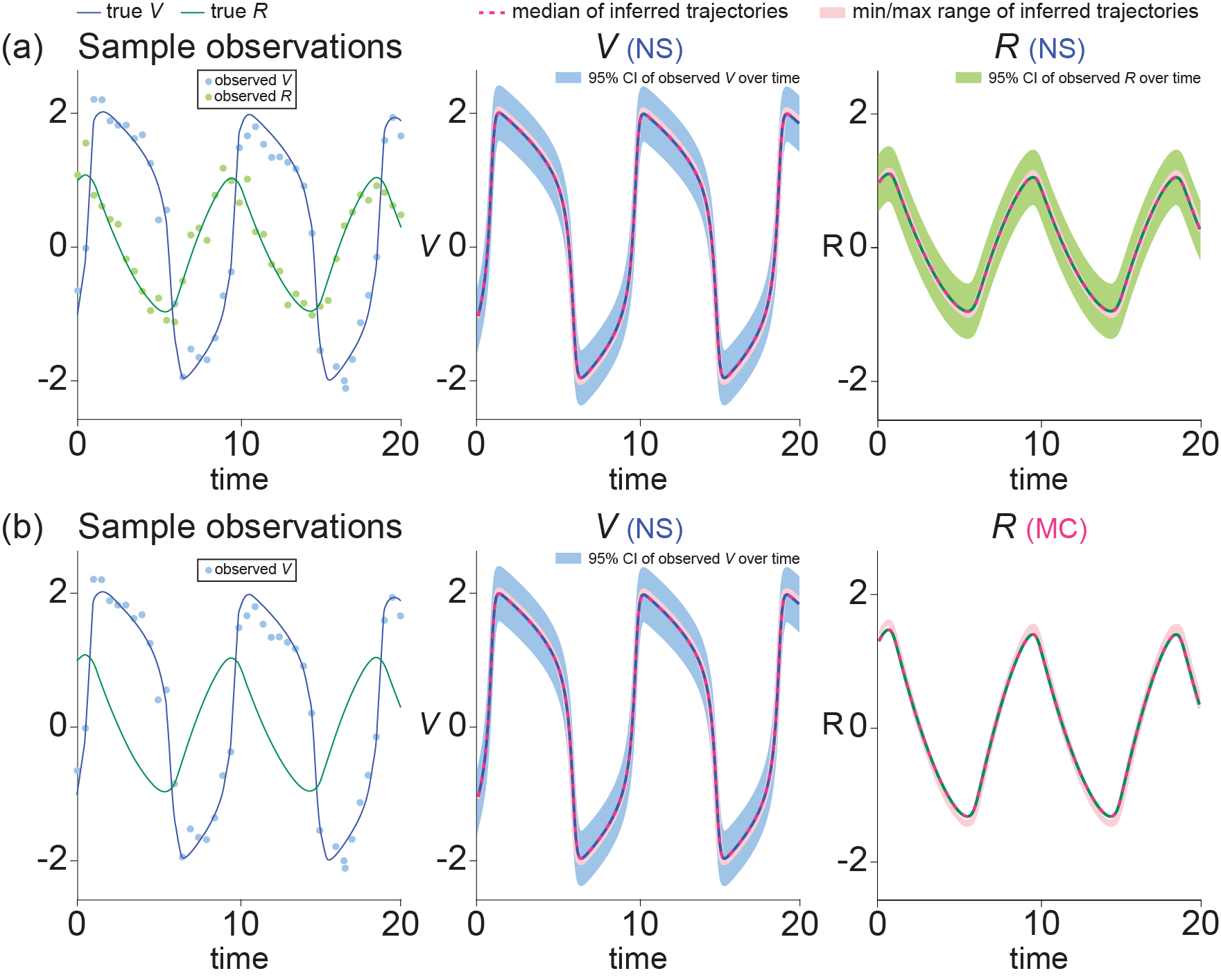}}
\caption{\textbf{System inference for the FN equation.} Demonstration of the capability of SiGMoID to infer true system solutions ($V$ and $R$) for the FN model. (a) NS datasets for $V$ and $R$ are provided (sample observations - blue and green dots). SiGMoID infers the true system solutions ($V$ (NS), $R$ (NS) - red dashed line) accurately. The red region represents the range of the solutions inferred using SiGMoID, while the blue region represents the $95\%$ confidence interval (CI) of the observed component over time. (b) The experimental dataset for $R$ is missing, while the analogue for $V$ is available (Sample observations - blue dots). Despite the absence of $R$ in the dataset, SiGMoID fits both the true $V$ and true $R$ successfully. The CI for the observed $R$ is omitted owing to the absence of any corresponding dataset.}
\label{fig2}
\end{center}
\end{figure}

\subsection{FitzHugh–Nagumo}
Spike potentials in ion channels can be interpreted as the interaction between the membrane potential voltage of a neuron $V$ and the recovery variable associated with the neuron current, $R$. This relationship is described by the FN equations for $\mathbf{y} =(V,R)$. The equations for $\mathbf{y}=(V,R)$ are expressed as follows:
\begin{equation*}f(\mathbf{y},\mathbf{p})=\begin{pmatrix} c\left(V-\frac{V^{3}}{3}+R\right) \\ -\frac{1}{c}\left(V-a+bR\right)\end{pmatrix},
\end{equation*}
where the set of parameters $\mathbf{p}=(a,b,c)$ indicates the equilibrium voltage level for the system $a$, the coupling strength between the recovery variable and the membrane potential $b$, and the timescale and sensitivity of the voltage dynamics $c$.

To generate sample observations, we initialize the true parameters values as $\mathbf{p}^{true}=(0.2,0.2,3)$ and set the initial condition to be $\mathbf{y}(0)=(-1,1)$ \cite{fitzhugh1961impulses}. Using these conditions, we compute the true underlying trajectories (\cref{fig2}(a), Sample observations-true $V, R$). Next, one hundred observed trajectories for $V$ and $R$ are generated corresponding to a noise level of $0.2$ using $41$ observation points to construct the NS dataset (\cref{fig2}(a), Sample observations-observed $V, R$) (See also the detailed scenario reported in \cite{dondelinger2013ode,wenk2019fast}). Under this configuration, the parameter estimation problem becomes non-identifiable (See \cref{fig:FN_CI} in Appendix for details).

SiGMoID is used to infer both the true underlying trajectories (\cref{fig2}(a), $V$ and $R$) and the parameters (\cref{table:params_FN} in Appendix, NS). It is observed to exhibit the lowest root mean square error (RMSE), $\frac{1}{N_s}\sum_{j=1}^{N_{s}}\sqrt{\frac{1}{N_t}\sum_{k=1}^{N_{o}}|y_{true,i}(t_{k})-y_{i}^{G}(t_{k})|^2}$, between the true $y_{true,i}(t)=y_{i}(t;p^{true})$ and estimated trajectories $y_{ij}^{G}(t) = y_{i}(t;p_{j}^{G})$ for each component $i$ compared to the other three methods---MAGI \cite{yang2021inference}, FGPGM \cite{wenk2019fast}, and AGM \cite{dondelinger2013ode} (\cref{table:RMSE_FN}, NS). Further, the parameters inferred using SiGMoID are observed to be the closest to the true values compared to those obtained using the other methods. These results demonstrate that SiGMoID not only accurately reconstructs the true underlying trajectories but also outperforms existing methods (MAGI, FGPGM, and AGM) in terms of estimation accuracy, establishing its effectiveness and reliability.
\begin{table}[t!]
\vskip 0.1in
\begin{center}
\begin{small}
\begin{sc}
\begin{tabularx}{0.47\textwidth}{l *{3}{>{\centering\arraybackslash}X}}
\toprule
Data & Method & $V$ & $R$ \\
\midrule
\multirow{4.2}{*}{NS} & SiGMoID    & $\mathbf{0.028}$ & $\mathbf{0.012}$ \\
& MAGI & $0.103$ & $0.070$ \\
& FGPGM & $0.257$ & $0.094$ \\
& AGM & $1.177$ & $0.662$  \\
\midrule
NSMC & SiGMoID    & $0.038$ & $0.016$  \\
\bottomrule
\end{tabularx}
\end{sc}
\end{small}
\end{center}
\caption{Trajectory RMSEs of each component in the FN system, comparing the average trajectory RMSE values of the four methods over one hundred simulated datasets.}
\label{table:RMSE_FN}
\end{table}
Although the neuronal potential voltage $V$ is easily observable, $R$ cannot be directly measured experimentally and must be inferred based on the dynamic system \cite{fitzhugh1961impulses, samson2025inference}. To this end, we apply SiGMoID to the NSMC dataset by removing the sample observations for $R$ in \cref{fig2}(a) (\cref{fig2}(b), sample observations). Despite the absence of observed data for $R$, the parameters inferred using SiGMoID based on the NSMC dataset are observed to be accurate and closely aligned with the true values (\cref{table:params_FN} in Appendix, NSMC).

\subsection{Protein transduction}
\begin{figure*}[t!]
\vskip 0.1in
\begin{center}
\centerline{\includegraphics[width=2\columnwidth]{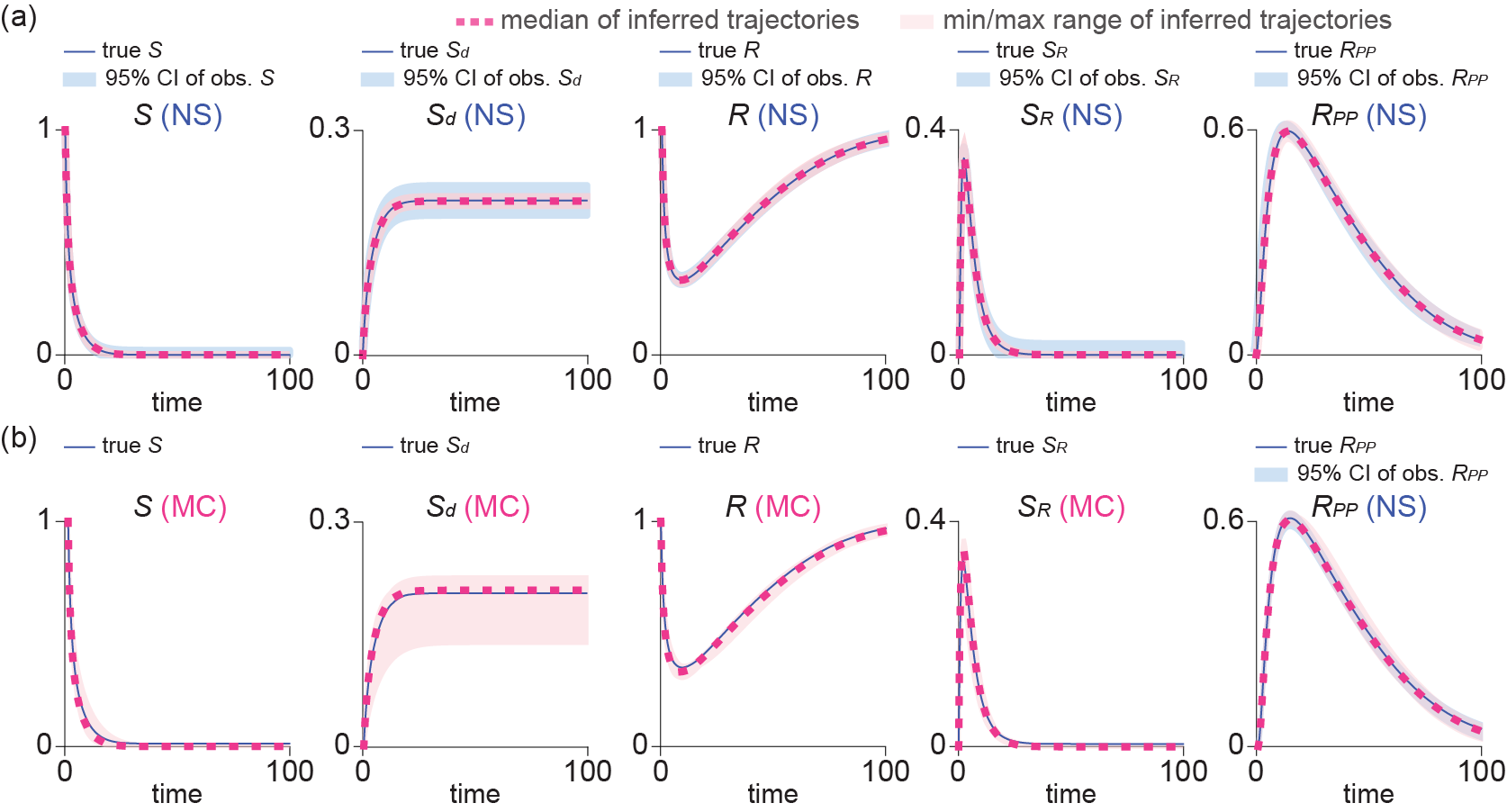}}
\caption{\textbf{System inference for a protein transduction model.} We infer the true system solutions (true $S$, $S_{d}$, $R$, $S_{R}$, and $R_{pp}$) using SiGMoID. (a) When all the components are given in the NS dataset, the solutions inferred using SiGMoID match the true solutions accurately. (b) Similar to (a), the solutions inferred using SiGMoID match the true solutions accurately, even though the components, $S$, $S_{d}$, $R$, and $S_{R}$ are not observable (MC).}
\label{fig3}
\end{center}
\end{figure*}

The dynamics of signaling proteins and their interactions in biochemical pathways can be described using a reaction network with variables $\mathbf{y} = (S, S_{d}, R, S_{R}, R_{pp})$. Specifically, the signaling protein $S$ interacts with the receptor protein $R$ to form the complex $S_R$ with a binding rate of $k_2$. The complex $S_R$ can dissociate back into $S$ and $R$ with a dissociation rate of $k_3$. In addition, $S_R$ facilitates the activation of $R$ into its phosphorylated form $R_{pp}$ with an activation rate of $k_4$. The signaling protein $S$ also degrades into its inactive form $S_d$ with a degradation rate of $k_1$. Finally, the activated receptor $R_{pp}$ is deactivated via Michaelis-Menten kinetics, where $V$ represents the maximum deactivation rate and $K_m$ denotes the Michaelis constant, which defines the concentration of $R_{pp}$ at which the deactivation rate becomes half of its maximum value. These dynamics are described by the following system of ODEs with $\mathbf{p}=(k_{1}, k_{2}, k_{3}, k_{4},V,K_{m})$:
\begin{equation*} f(\mathbf{y},\mathbf{p})=\begin{pmatrix} -k_{1}S - k_{2}SR + k_{3}S_{R}\\ k_{1}S \\ -k_{2}SR - k_{3}S_{R} + \frac{V R_{pp}}{K_{m}+R_{pp}}\\ k_{2}SR - k_{3}S_{R} - k_{4}S_{R} \\ k_{4}S_{R} - \frac{V R_{pp}}{K_{m}+R_{pp}} \end{pmatrix}. \end{equation*} 

To generate sample observations, we set $\mathbf{p}^{true}=(0.07, 0.6, 0.05, 0.3, 0.017, 0.3)$ and $\mathbf{y}(0)=(1, 0, 1, 0, 0)$. Using these conditions, the true underlying trajectories are computed (\cref{fig3}(a), Sample observations-true $S, S_{d} ,R, S_{R},R_{pp}$). Subsequently, one hundred observed trajectories are generated corresponding to a noise level of $0.01$, and $26$ observation time points are used to construct the NS dataset (\cref{fig3}(a), Sample observations- $95\%$ CI of obs. $S, S_{d}, R, S_{R}, R_{pp}$). Notably, the selected noise level represents a severely challenging scenario, as described in \cite{vyshemirsky2008bayesian}.

Subsequently, both the true underlying trajectories (\cref{fig3}(b), $S, S_{d} ,R, S_{R}$ and $R_{pp}$) and the parameters are inferred using SiGMoID. As demonstrated previously, SiGMoID achieves the lowest RMSE values between the true and estimated trajectories compared to the other three methods (MAGI, FGPGM, and AGM) (\cref{table:RMSE_pt}, NS). Further, identifiability problems are observed in the parameter estimates for the protein transduction model \cite{dondelinger2013ode, wenk2019fast}, where the system exhibits low sensitivity to certain parameters, complicating the unique determination of their values based on the observed data. Despite this phenomenon, the parameter estimates obtained using SiGMoID are observed to match the true values most accurately (\cref{table:params_pt} in Appendix, NS).
\begin{table}[t!]
\begin{center}
\begin{small}
\begin{sc}
\begin{tabularx}{0.5\textwidth}{l *{6}{>{\centering\arraybackslash}X}}
\toprule
Data & Method & $S$ & $S_d$ & $R$ & $S_{R}$ & $R_{pp}$ \\
\midrule
\multirow{4.2}{*}{NS} & SiGMoID    & $\mathbf{0.3}$ & $\mathbf{0.8}$ & $\mathbf{1.2}$ & $\mathbf{0.4}$ & $\mathbf{1.3}$\\
& MAGI & $12.2$ & $4.3$ & $16.7$ & $13.5$ & $13.6$ \\
& FGPGM    & $12.8$ & $8.9$ & $21.0$ & $13.6$ & $30.9$ \\
& AGM    & $67.1$ & $312.5$ & $413.8$ & $98.0$ & $297.3$ \\
\midrule
NSMC & SiGMoID    & $11.2$ & $11.2$ & $4.9$ & $3.8$ & $2.9$ \\
\bottomrule
\end{tabularx}
\end{sc}
\end{small}
\end{center}
\caption{Trajectory RMSEs of each component in the protein transduction system, comparing the average trajectory RMSE of the four methods over one hundred simulated datasets. (scaled by $\times 10^{3}$)}
\label{table:RMSE_pt}
\end{table}

Similar to the application of SiGMoID to the FN model, we obtain the NSMC dataset by removing sample observations corresponding to all components except $R_{pp}$ (NS) (\cref{fig3}(b)), which is consistent with the experimental results reported in \cite{vyshemirsky2008bayesian}. Notably, this unobserved component renders the parameter inference problem non-identifiable, (See \cref{fig:Protein_CI} in Appendix for details). Despite the absence of all components except $R_{pp}$, SiGMoID exhibits robust performance in inferring both trajectories (\cref{fig3}(b), $S, S_{d} ,R, S_{R},R_{pp}$ and \cref{table:RMSE_pt} NSMC) and parameters (\cref{table:params_pt} in Appendix, NSMC).

\subsection{\textit{Hes1} model}
\begin{figure}[ht!]
\begin{center}
\centerline{\includegraphics[width=\columnwidth]{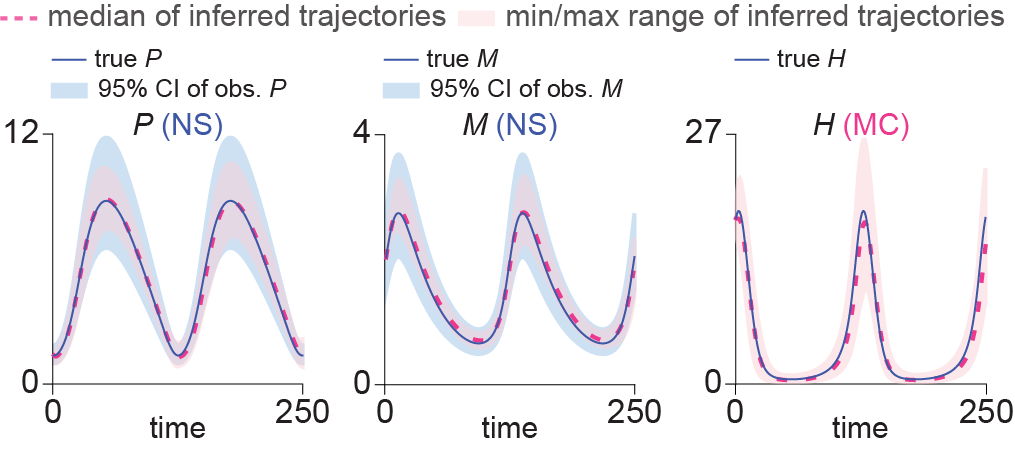}}
\caption{\textbf{System inference for a \textit{Hes1} model.} Datasets for the two components, $P$ and $M$, in the \textit{Hes1} model are provided as NS data, while the component, $H$, is unobserved (MC). SiGMoID accurately infers not only the true system solutions for all components but also captures the noise in the data ($95\%$ CI of obs. $P$ and $M$).}
\label{fig4}
\end{center}
\end{figure}

\begin{table}[h]
\begin{center}
\begin{small}
\begin{sc}
\begin{tabularx}{0.47\textwidth}{l *{3}{>{\centering\arraybackslash}X}}
\toprule
Method & $P$ & $M$ & $H$ \\
\midrule
SiGMoID    & $\mathbf{0.38}$ & $\mathbf{0.12}$ & $\mathbf{1.96}$ \\
MAGI & 0.97 & 0.21 & 2.57 \\
\cite{ramsay2007parameter} & 1.30 & 0.40 & 59.47 \\
\bottomrule
\end{tabularx}
\end{sc}
\end{small}
\end{center}
\caption{Trajectory RMSEs of each component in the hes1 system, comparing the average trajectory RMSE of the three methods over $2,000$ simulated datasets. Note that $H$ denotes the missing component.}
\label{table:RMSE_hes1}
\end{table}

The oscillation of \textit{Hes1} mRNA $M$ and \textit{Hes1} protein $P$ levels in cultured cells can be described using a dynamic system \cite{hirata2002oscillatory}. This system postulates the involvement of an interacting factor $H$ that contributes to stable oscillations, representing a biological rhythm \cite{forger2024biological}. The system of ODEs for the three-component state vector $\mathbf{y} = (P, M, H)$ is expressed as follows:
\begin{equation}
f(\mathbf{y},\mathbf{p})=\begin{pmatrix}
-aPH+bM-cP \\
-dM+\frac{e}{1+P^2} \\
-aPH+\frac{f}{1+P^2}-gH
\end{pmatrix},\label{eqn_hes1}
\end{equation}
where $\mathbf{p}=(a,b,c,d,e,f,g)$ are the associated parameters.

The \textit{Hes1} model is a prime example of the challenges of inference in systems with unobserved components and asynchronous observation times. In \cite{hirata2002oscillatory}, the theoretical oscillatory behavior of the system was established using the true parameter values $a = 0.022, b = 0.3, c = 0.031, d = 0.028, e = 0.5, f = 20$, and $g = 0.3$, revealing an oscillation cycle of approximately $2$ hours. The initial conditions were defined as the lowest value of $P$ during oscillatory equilibrium, with $P = 1.439, M = 2.037$, and $H = 17.904$. To construct the NSMC dataset from the true solution in this study, a noise level of $\sigma = 0.15$ is adopted based on \cite{yang2021inference}. This choice is aligned with the reported standard errors, which account for approximately $15\%$ of $P$ (protein) and $M$ (mRNA) levels in repeated measurements. Accordingly, simulation noise is modeled as multiplicative, using a lognormal distribution with $\sigma = 0.15$. Because of the multiplicative nature of the error in this strictly positive system, a log-transformation is applied to \cref{eqn_hes1} to produce Gaussian error distributions. Additionally, the component $H$ is treated as unobserved and excluded from the simulated observations. The absence of this component leads to a non-identifiable parameter inference problem (See \cref{fig:Hes1_CI} in Appendix).

Using the above setting, $2,000$ simulated datasets are generated. Three methods (SiGMoID, MAGI, and \cite{ramsay2007parameter}) are then implemented on each dataset to infer system solutions and estimate system parameters (\cref{fig4}). Among these methods, SiGMoID accurately infers both observable ($P, M$) and unobservable ($H$) true solutions. Further, compared to the two other methods, SiGMoID achieves the lowest RMSE values related to the difference between the inferred and true trajectories (\cref{table:RMSE_hes1}).

\begin{figure}[ht!]
\begin{center}
\centerline{\includegraphics[width=\columnwidth]{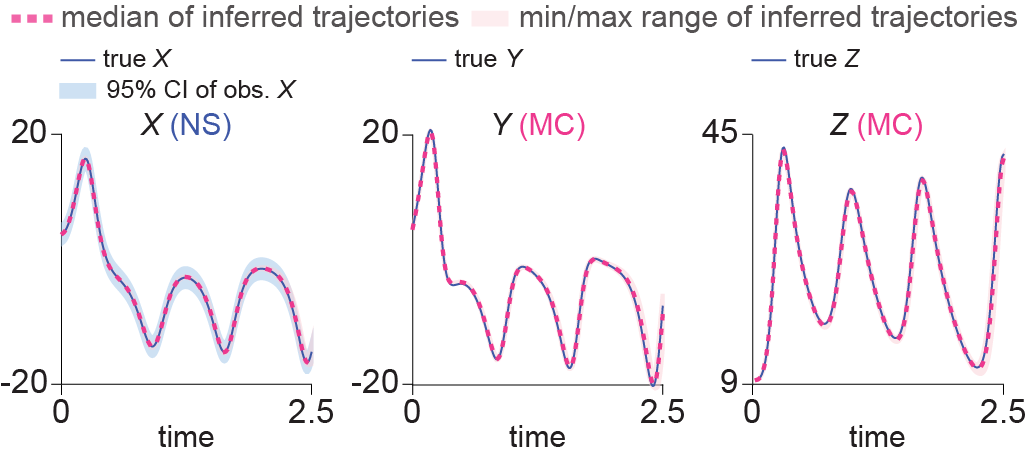}}
\caption{\textbf{System inference for a Lorenz equation.} We infer the true system solutions (true $X$, $Y$ and $Z$) using SiGMoID. (a) When all the components are given in the NS dataset, the solutions inferred using SiGMoID match the true solutions accurately. (b) Similar to (a), the solutions inferred using SiGMoID match the true solutions accurately, even though the components, $Y$ and $Z$ are not observable (MC).}
\label{fig5}
\end{center}
\end{figure}

We also compare the parameter estimates results in \cref{table:params_hes1} in Appendix. Note that while the temporal dynamics of the two observable components, $P$ and $M$, are influenced by various combinations of four parameters ($b, c, d$, and $e$), the unobservable component $H$ is determined exclusively by the other three parameters ($a, f$, and $g$), leading to an identifiability problem \cite{yang2021inference}. This shows that our estimation approach is well-suited for inferring $H$, as it can be accurately reconstructed once $a, f$, and $g$ are correctly determined.

\subsection{Lorenz system}
The Lorenz system was proposed to model simplified atmospheric convection using three key variables, $\mathbf{y} = (X, Y, Z)$ \cite{lorenz2017deterministic}. In this formulation, $X$ denotes the intensity of convective motion, $Y$ represents the temperature difference between rising and sinking air flows within a convection cell, and $Z$ indicates the deviation from thermal equilibrium or, more specifically, the vertical temperature distribution in the system:

\begin{equation*} f(\mathbf{y},\mathbf{p})=\begin{pmatrix} \sigma (Y-X)\\ X (\rho - Z) - Y \\ XY - \beta Z \end{pmatrix}, \end{equation*} 

where $\mathbf{p} = (\sigma, \rho, \beta)$ represents physical parameters with specific interpretations: the Prandtl number $\sigma$ controls the ratio of fluid viscosity to thermal diffusivity, the Rayleigh number $\rho$ quantifies the driving force of convection due to temperature differences, and the parameter $\beta$ is associated with the damping effect on convective motion.

We examine the capability of SiGMoID to estimate the parameters of the Lorenz system under the NSMC setting, where two system components, $Y$ and $Z$ are unobservable (\cref{fig5}). The Lorenz parameters are set to $\sigma = 10$, $\rho = 28$, and $\beta = 8/3$, with the initial condition set to $\mathbf{y}(0)=(4.67, 5.49, 9.06)$ \cite{hirsch2013differential}. 

In this setting, we first generate the NS dataset by integrating the Lorenz system using the RK4 method, adding additive noise with a noise level of 0.1. For the NSMC dataset, we omit both $Y$ and $Z$ variables. This setup is motivated by the framework introduced in \cite{stepaniants2024discovering}, and such missing components still introduce challenges for parameter identifiability (See \cref{fig:Lorenz_CI} in Appendix for details). Next, one hundred observed trajectories for $X$ is generated using 9 observation points.

Although $Y$ and $Z$ are remained unobserved (\cref{fig5}), SiGMoID accurately infers both unobservable states ($Y, Z$). Compared to the result of \cite{stepaniants2024discovering}, the estimation result obtained by SiGMoID shows significantly improved accuracy in capturing the difference between the inferred and true trajectories. Furthermore, the inferred parameters closely match the true values, as demonstrated in \cref{table:params_lorenz} in Appendix.

\section*{Disucssion}
SiGMoID enables accurate parameter estimation, robust noise quantification, and precise inference of unobserved system components across various synthetic examples by integrating HyperPINN with W-GAN. These experimental results indicate the potential applicability of existing deep learning-based models across diverse fields. In particular, the proposed framework shows strong potential to address real-world NSMC challenges. For example, in virology, only viral load may be observed in target-cell limited datasets \cite{smith2018influenza}; in epidemiological modeling, the number of individuals in the latent stage may not be recorded over time \cite{hong2024overcoming}; and in cell-signaling studies, intermediate steps in the pathway are often hidden \cite{jo2024density}. In such diverse fields, full system observations are rarely available, highlighting the potential of SiGMoID to contribute to data-driven scientific discovery under incomplete or partially observed conditions.

The main task of SiGMoID is not to develop a new surrogate model, but to reconstruct missing components from incomplete (NSMC) data by combining two modules: (i) a surrogation model (HyperPINN) and (ii) an inference module (W-GAN). While HyperPINN was used as the surrogate model in this study, this choice is not essential. Depending on the characteristics of the underlying dynamical system, other surrogate architectures such as Fourier neural operators \cite{li2020fourier} can be employed to further improve computational efficiency and scalability. This modular design underpins the generality of SiGMoID and supports its applicability to a broad class of dynamical systems beyond the four examples presented in this paper.

The use of a HyperPINN-based ODE solver significantly enhanced simulation efficiency by quickly generating solutions of differential equations across a wide range of parameter settings. This addresses the computational inefficiencies inherent in traditional PINN methods, offering scalability for high-dimensional or real-time system analysis. However, there are clear areas for improvement in this approach. Specifically, training HyperPINNs requires a priori knowledge of the distribution of parameters within the dataset \cite{de2021hyperpinn}. The number of artificial trajectories required for training grows proportionally with the size of the parameter space, leading to increased computational cost and training time. This limitation highlights an ongoing challenge in the field of deep learning-based operator learning. Consequently, the overall efficiency of SiGMoID can be further enhanced in parallel with advances in ODE solver methodologies (e.g., \cite{lee2025finite}).

Recently, advanced generative modeling techniques such as diffusion models and flow-matching methods \cite{ho2020denoising, lipman2022flow} have gained significant attention for their strong performance in high-dimensional data generation tasks. However, these approaches are primarily designed for data-to-data generation, where both input and output share the same sample space. In contrast, our task involves parameter-to-data inference, which requires learning a mapping from parameters in dynamic system to observed trajectories. Moreover, these models typically require large datasets and extensive training to perform well, which is not feasible in our data-scarce context. In many biological domains, particularly those involving living organisms, the number of available samples is inherently very limited. To reflect this realistic scenario, our simulations were conducted with fewer than one hundred observed trajectories. Therefore, we adopted W-GAN as a practical and stable framework for distribution matching under data-scarce conditions, and it demonstrated robust performance across our experiments, confirming its suitability for this study.

An important future direction is to extend the framework to settings where the governing equations are also partially or entirely unknown. While the current study focuses on parameter inference under known equation structures, which is a realistic scenario in many scientific domains, discovering the underlying dynamics itself remains a fundamental open challenge. Existing approaches such as SINDy and neural ODEs require some prior knowledge of functional forms or suffer from severe identifiability issues when multiple candidate dynamics are consistent with the same data \cite{champion2019data, chen2018neural}. Moreover, purely data driven methods are highly sensitive to noise and often lack interpretability, which is critical for scientific understanding. We have also conducted preliminary experiments with transformer based models for structure discovery \cite{d2023odeformer}. Although promising, these approaches remain sensitive to identifiability issues, particularly under limited and noisy data. Developing hybrid methods that integrate such structure discovery models with robust inference techniques, while incorporating domain specific priors or symbolic inductive biases, represents a promising but challenging avenue for future research.

\section*{Conclusion}
In this study, we introduce SiGMoID (Simulation-based Generative Model for Imperfect Data), a framework designed to handle noisy, sparse, and partially observed data in dynamic systems. By integrating HyperPINN with W-GAN, SiGMoID enables simultaneous inference of system parameters and reconstruction of missing dynamics, achieving superior accuracy and robustness compared to conventional methods \cite{dondelinger2013ode, wenk2019fast, yang2021inference, stepaniants2024discovering}. Therefore, these results show that deep learning–based inference models can be broadly applicable across various domains, even when full system observations are scarce. Consequently, SiGMoID offers a powerful approach for data-driven scientific discovery under incomplete or partially observed conditions.

\section{Acknowledgments}
Hyung Ju Hwang was supported by the National Research Foundation of Korea (NRF) grant funded by the Korea government (MSIT) (RS-2023-00219980 and RS-2022-00165268) and by Institute for Information \& Communications Technology Promotion (IITP) grant funded by the Korea government (MSIP) (No. RS-2019-II191906, Artificial Intelligence Graduate School Program (POSTECH)). Hyeontae Jo was supported by the National Research Foundation of Korea (RS-2024-00357912) and a Korea University grant (K2418321, K2425881).

\newpage
\bibliography{aaai2026}

\newpage
\appendix
\onecolumn

\renewcommand{\thetable}{A\arabic{table}}
\renewcommand{\thefigure}{A\arabic{figure}}
\setcounter{table}{0}
\setcounter{figure}{0}

\section{Methodological Details}\label{method_detail}
This section presents the structure of SiGMoID in detail and describes its inference strategy used to capture system parameters $\mathbf{p}$ in \cref{general_ode} and noise level $\mathbf{\epsilon}$ in \cref{data_noise}, which enables the recovery of missing components in NSMC data. To describe SiGMoID structurally, we first formalize the representation of imperfect data (NS and NSMC data) in \cref{data_noise} as follows. Let $I$ and $J$ be the index sets of all observable components of $\mathbf{y}$ and observation time points $t^{o}_{j}$ respectively. While the observation time points may differ across components in general (i.e., the number of observation points $N_{o}$ and the values $t_{j}^{o}$ can depend on $y_{i}$), all observation time points can be merged into a single ordered set by combining the unique time points across all components. Hence we assume that this unified ordered set is used for the analysis. Then the observed data can be described as follows:
\begin{eqnarray}\label{data_noise_math}
    Y_{n}^{o}=& \{(t_{j},y_{n,i}^{o}(t_{j}))\}_{i \in I, j \in J} \nonumber\\
    =& \{(t_{j}, y_{i}(t_{j};\mathbf{p}_{true})+e_{n,i}^{o}(t_{j}))\}_{i \in I, j \in J}
\end{eqnarray}
where $e_{n,i}^{o}(t_{j})$ denotes the noise for $n$-th observed data $Y_{n}^{o}$ at $t_{j}$. For convenience, we define the set of all noise terms in $n$-th data, $\mathbf{e}_{n}^{o}=\{e_{n,i}^{o}(t_{j})\}_{i\in I, j\in J}$. If $\mathbf{e}_{n}^{o}$ has a nonzero element, $Y^{o}=\{Y_{n}^{o}\}_{n=1}^{N_{o}}$ is called NS; if there exists a component $i \in \{1,\dots,d_{y}\}$ such that $i\notin I$, then $Y^{o}$ is called NSMC. Notably, both forms share the common characteristic of lacking a complete set of observations, with missing components potentially regarded as extreme cases of sparse data. The primary distinction between the two is that sparse data, for which some observations are available, can often be addressed using simple interpolation. In contrast, the recovery of missing components is more challenging, especially when the information available about system dynamics is insufficient. In other words, NSMC data can be treated as a subcategory of NS data.

\subsection{ODE solver using HyperPINN}
We propose an ODE solver capable of directly computing the solution $\mathbf{y}(t)$ of \cref{general_ode} for a given parameter $\mathbf{p}$ using HyperPINN \cite{de2021hyperpinn}, which incorporates two fully connected neural networks---a hypernetwork $h$ \cite{ha2016hypernetworks, galanti2020modularity, lee2023hyperdeeponet} and a fully connected main network $m$. Specifically, the hypernetwork $h$, parameterized by weights and biases, $\theta_{h}$, maps the parameter $\mathbf{p}$ to the weights and biases $\theta_{m}$ of the main network $m$: 
\begin{equation} \label{equation:hypernetwork} \theta_{m}(\mathbf{p})=h(\mathbf{p};\theta_h).
\end{equation} 
Details regarding the number of nodes and activation functions are provided in \cref{table:network settings:hyperpinn} in Appendix. Once $\theta_{m}(\mathbf{p})$ is obtained by training the hypernetwork, these values are used as the weights and biases of the main network $m$. In other words, the output of the main network $m$ is directly determined by the outputs of the hypernetwork. Consequently, the main network immediately yields a function $m(t;\theta_m(\mathbf{p}))$ that closely approximates the solution of \cref{general_ode}, $\mathbf{y}(t;\mathbf{p})$:
\begin{equation}\label{equation:mainnetwork}
\mathbf{y}(t;\mathbf{p}) \approx m(t;\theta_m(\mathbf{p})).
\end{equation}
To train HyperPINN for the aforementioned task, we first define the probability distribution of parameters $\mathbf{p}$, denoted by $\mathcal{D}$, as well as the time interval $[0, T]$, which encompasses the time span covered by experimental data. Next, we construct two loss functions based on \citep{de2021hyperpinn}: 1) data loss $L_{\text{data}}$ and 2) physics loss $L_{\text{physics}}$. First, $L_{\text{data}}$ is used to fit the output of the main network $m$ to the solution $\mathbf{y}(t;\mathbf{p})$ by minimizing the difference between them, as follows:
\begin{equation}\label{equation:loss_data}
L_{\text{data}}(\theta_h) = \sum_{j=1}^{T_{col}} \mathbb{E}_{\mathbf{p}\sim\mathcal{D}} \left\Vert m(t^c_{j}; \theta_m(\mathbf{p})) - \mathbf{y}(t^c_{j};\mathbf{p})\right\Vert^2,
\end{equation}
where $\{t_j^{c}\}_{j=1}^{T_{col}}$ represents the collocation time points within the time interval $[0,T]$ and $\mathbb{E}$ represents the expectation over the probability distribution $\mathcal{D}$. Simultaneously, $L_{\text{physics}}$ is designed to measure the extent to which the output of the main network $m$ satisfies the ODEs \cref{general_ode}. This measurement can be quantified by substituting the output of the main network into the DE, as follows:
\begin{equation}\label{equation:loss_phy}
L_{\text{physics}}(\theta_h) = \sum_{j=1}^{T_{col}} \mathbb{E}_{\mathbf{p}\sim\mathcal{D}} 
\Bigg\Vert \frac{d}{dt} m(t_j^{c}; \theta_m(\mathbf{p}))
- \mathbf{f}\big[m(t_j^{c}; \theta_m(\mathbf{p})), \mathbf{p} \big] \Bigg\Vert^2.
\end{equation}
We choose $\mathcal{D}$ to be a uniform distribution over the interval $[\mathbf{p}_{min}, \mathbf{p}_{max}]$, denoted as $\mathcal{U}_{[\mathbf{p}_{min}, \mathbf{p}_{max}]}$. The lower and upper bounds of the interval are typically determined empirically. In this paper, we choose $0.5*\mathbf{p}^{true}$ and $1.5*\mathbf{p}^{true}$ as the respective bounds in all our examples, with the true underlying parameter $\mathbf{p}^{true}$. Subsequently, we calculate $N_{p}$ numerical solutions $\{\mathbf{y}_{i}^{s}(t_{j}^{c};\mathbf{p}^{s}_{i}) | \mathbf{p}^{s}_{i}\sim\mathcal{D}\}_{i=1}^{N_{p}}$ for each collocation time points $\{t_{j}^{c}\}_{j=1}^{T_{col}}$ to construct the training dataset. By minimizing \cref{equation:loss_data} and \cref{equation:loss_phy}, the output of the main network $m(t;\theta_m(\mathbf{p}))$ is expected to not only closely approximate $\mathbf{y}(t;\mathbf{p})$ but also accurately describe the underlying dynamics of the system. Through this training procedure, we expect that the main network $m$ immediately provides solutions corresponding to arbitrary $\mathbf{p}\in\mathcal{D}$. The theoretical justification of this procedure is provided in Mathematical Analysis.

Because the units of the two loss functions are generally not equal, biases may arise while minimizing them (\cref{equation:loss_data}-\cref{equation:loss_phy}). To avoid this, two positive weights, $\alpha$ and $\beta$, are introduced. This enables training to be performed by minimizing the total loss function, $L(\theta_h)$, which is defined as the weighted sum of the two loss functions using $\alpha$ and $\beta$, as follows:
\begin{equation}\label{equation:total}
L(\theta_h) = \alpha L_{\text{data}}(\theta_h) + \beta L_{\text{physics}}(\theta_h).
\end{equation} Detailed values for selecting $N_p, \alpha$, and $\beta$ are provided in  \cref{table:network settings:hyperpinn}.

\subsection{Estimation of NS and NSMC data with W-GAN}
In this section, we demonstrate the estimation of the true parameter $\mathbf{p}^{true}$ and data noise $\mathbf{e}^{o}=\{\mathbf{e}_{n}^{o}\}_{n=1}^{N_{o}}$ that generates the given data $Y^{o}$ (defined in Equation~(4)) using HyperPINN. To this end, we construct W-GAN comprising two generators, $G^p$ and $G^e$, and a discriminator, $D$ with trainable parameters $\theta_{G^p}, \theta_{G^e},$ and $\theta_{D}$, respectively. For the two generators, we randomly sample two sets of variables $\{ \mathbf{z}^{p}_{n} \}_{n=1}^{N_{G}}$, $\{\mathbf{z}^{e}_{n} \}_{n=1}^{N_{G}}$ from the standard normal distribution $\mathcal{N}(0,I_{d_{n}})$, where $d_{n}$ denotes the noise dimension. These samples are mapped to $\mathbf{p}^{G}=\{\mathbf{p}_{n}^{G}=G^p(\mathbf{z}_{n}^{p})\}_{n=1}^{N_{G}}$ and $\mathbf{e}^{G}=\{\mathbf{e}_{n}^{G}=G^{e}(\mathbf{z}_{n}^{e})\}_{n=1}^{N_{G}}$ via the two generators, $G^p$ and $G^e$, of the W-GAN. These mappings may be interpreted as sets of possible true parameters and noise, respectively. Let the transformed variables follow the distributions, $\pi_{p}^{G}(\mathbf{p}_{n}^{G})$ and $\pi_{e}^{G}(\mathbf{e}_{n}^{G})$. For each ($\mathbf{p}_{n}^{G}$,$\mathbf{e}_{n}^{G}$), we can simultaneously obtain approximations for the solutions of ~(\cref{general_ode}), $Y_{n}^{G}$, using HyperPINN.\begin{eqnarray*}
    Y_{n}^{G}=& \{(t_{j},y_{n,i}^{G}(t_{j}))\}_{i \in I, j \in J} \nonumber\\
    =& \{(t_{j}, m_{i}(t_{j};\theta_{m}(\mathbf{p}_{n}^{G}))+e_{n,i}^{G}(t_{j}))\}_{i \in I, j \in J}
\end{eqnarray*}
where $e_{n,i}^{G}(t_{j})$ denotes the generated noise for $n$-th generated trajectory $Y_{n}^{G}$ at $t_{j}$. For convenience, we define the set of all noise terms in $n$-th trajectory, $\mathbf{e}_{n}^{G}=\{e_{n,i}^{G}(t_{j})\}_{i\in I, j\in J}$. Next, we adjust the distributions $\pi_{p}^{G}(\mathbf{p}_{n}^{G}), \pi_{e}^{G}(\mathbf{e}_{n}^{G})$ to ensure that the distribution of $Y_{n}^{G}$ is sufficiently close to the given observed dataset $Y_{n}^{o}$. This adjustment involves measuring the difference between $Y^{G}=\{Y_{n}^{G}\}_{n=1}^{N_{G}}$ and $Y^{o}=\{Y_{n}^{o}\}_{n=1}^{N_{o}}$, and then modifying the sets $\mathbf{p}^{G}$, $\mathbf{e}^{G}$ to minimize this difference. Note that in practice, we set $N_{G}=N_{o}$.

For this task, we utilize the discriminator in W-GAN equipped with a gradient penalty, yielding the correct parameter distribution via a generator \citep{arjovsky2017wasserstein, gulrajani2017improved}. Using W-GAN, we aim to reduce the Wasserstein distance (with Kantorovich–Rubinstein duality \citep{villani2009optimal}) between the distribution of $Y^{G}$ and $Y^{o}$, denoted as $\mu_{G}$ and $\mu_{o}$, respectively: 
\begin{equation}\label{w_dist} 
d(\mu_{G}, \mu_{o}) = \sup_{\phi \in \text{Lip}_{1}}\mathbb{E}_{Y^{G} \sim \mu_{G}}[\phi(Y^{G})]-\mathbb{E}_{Y^{o} \sim \mu_{o}}[\phi(Y^{o})],
\end{equation}
where $\text{Lip}_{1} = \{\phi \in Lip(\mathbb{R}^{|\Omega|},\mathbb{R}) | \| \phi\|_{Lip}\leq1\}$ represents the Lipschitz constant over the set of all real-valued Lipschitz functions on $\mathbb{R}^{|\Omega|}$ (i.e., $Lip(\mathbb{R}^{|\Omega|},\mathbb{R})=\{f:\mathbb{R}^{|\Omega|}\rightarrow\mathbb{R} | \|f\|_{Lip}=\sup_{x \neq y } \frac{|f(x)-f(y)|}{|x-y|}<\infty $\}). Ideally, we obtain both true parameters and data noise by minimizing \cref{w_dist} via ADAM optimizer. Futhermore, to ensure stable training, we monitor the approximated Wasserstein distance during optimization and apply early stopping once the distance ceases to decrease. In practice, this criterion leads to convergence within 100,000 epochs. 

However, \cref{w_dist} cannot be directly applied to computational devices, as it requires searching over infinite-dimensional spaces (e.g., the $\sup$ operation). To address this, a suitable approximation of \cref{w_dist} are needed for practical implementation on computational devices. For this, we utilize the discriminator $D(\cdot;\theta_{D})$ in W-GAN, with a weights and biases $\theta_{D}$. We then alternatively use the following two loss functions---generator loss $L_{G}(\theta_{G},\theta_{D})$ and discriminator loss $L_{D}(\theta_{G},\theta_{D})$---instead of the \cref{w_dist}: 
\begin{align}
\displaystyle L_{G}(\theta_{G},\theta_{D}) & = \frac{1}{N_{o}}\sum_{n=1}^{N_{o}}D(Y_{n}^{o};\theta_{D})\nonumber - \frac{1}{N_{G}}\sum_{n=1}^{N_{G}}D(Y_{n}^{G};\theta_{D}) + \lambda_{e}L_{e}(\theta_{G^{e}}), \end{align}\begin{align}
\displaystyle
L_{e}(\theta_{G^{e}}) &= \sum_{i \in I, j \in J }\frac{1}{N_{G}}\sum_{n=1}^{N_{G}}e_{n,i}^{G}(t_{j})^{2} + \sum_{i \in I, j \in J }\frac{1}{N_{G}}\sum_{n=1}^{N_{G}}(\text{Var}(y_{ij}^{o}) - \text{Var}(e_{n,i}^{G}(t_{j})))^{2},\nonumber
\end{align}
\begin{align}
\displaystyle L_{D}(\theta_{G},\theta_{D}) & = -\frac{1}{N_{o}}\sum_{n=1}^{N_{o}}D(Y_{n}^{o};\theta_{D})\nonumber + \frac{1}{N_{G}}\sum_{n=1}^{N_{G}}D(Y_{n}^{G};\theta_{D}) \\
& + \frac{\lambda_{D} }{N_{G}}\sum_{n=1}^{N_{G}}(||\nabla_{\hat{Y}_{n}^{G}}D(\hat{Y}_{n}^{G};\theta_{D})||_{2}-1)^{2},\nonumber
\end{align}
where $\theta_{G} = (\theta_{G^{p}},\theta_{G^{e}})$. Here,  the term $L_{e}$ of generator loss constrains the mean and variance of generated data noise $\mathbf{e}^{G}$ to improve the accuracy of the approximation task of  $G^{e}$, as theoretically justified in Mathematical Analysis \cref{gan3}. The coefficient $\lambda_{e}$ is chosen to minimize the RMSE computed on the observable trajectories.

The augmented data $\displaystyle \hat{Y}^{G}=\{(t_{j},\hat{y}_{n,i}^{G}(t_{j}))\}_{i \in I, j \in J}$ for calculating the gradient penalty (in the last term of $\displaystyle L_{D}(\theta_{G},\theta_{D})$) is defined as:
$$
\displaystyle \hat{y}_{n,i}^{G}(t_{j}) = \gamma(i,t_{j})y_{n,i}^{G}(t_{j}) + (1-\gamma(i,t_{j})) y_{n,i}^{o}(t_{j}),
$$
with uniform random coefficient $\gamma(i,t_{j}) \sim U[0,1]$ for each $i \in I, j \in J$. The gradient penalty term enforces the discriminator $D$ to be a 1-Lipschitz function, ensuring that minimizing the loss is equivalent to finding $\phi$ in \cref{w_dist} \citep{gulrajani2017improved}. Following the recommendations of \citep{gulrajani2017improved}, we set the value of the coefficient $\lambda_{D}$ to 10.

For each generator and discriminator in W-GAN, we employed fully connected neural networks with hyperbolic tangent activation functions (see also the hyperparameters for W-GAN in \cref{table:network settings:wgan} in Supplementary Tables of the Appendix). Given the scarcity of data, mini-batch training can introduce significant instability. Therefore, we used full batch training, generating fake data in the same quantity as the real data to enhance stability.
\begin{algorithm}\caption{Training process for SiGMoiD}\label{alg:SiGMoiD}
\begin{algorithmic}
\Require Gradient penalty coefficient $\lambda_{D}$, number of critic iterations per generator iteration $n_{\text{critic}}$, noise penalty coefficient $\lambda_{e}$, and Adam hyperparameters $\alpha$, $\beta_1$, $\beta_2$.
\Require Pretrained hypernetwork $h(\cdot; \theta_{h})$ and main network $m = \{m_{i}(\cdot; \theta_{m})\}_{i \in I}$.
\Require Initialize critic parameters $\theta_{D}$ and generator parameters $\theta_{G} = (\theta_{G^{p}}, \theta_{G^{e}})$.
\While{$\theta_{G}$ has not converged}
    \For{$t = 1, \dots, n_{\text{critic}}$}
        \State Sample real trajectories $Y_{n}^{o} \sim \mathbb{P}_{r}$ and random numbers $\gamma(i,t_{j}) \sim U[0,1]$ for each component $i \in I$ and observed time $t_{j}$. Also, sample latent variables $\boldsymbol{z}_{n}^{p},\boldsymbol{z}_{n} ^{e} \sim p(z)$.
        \State $\boldsymbol{p}_{n}^{G} \leftarrow G^{p}(\boldsymbol{z}_{n}^{p})$
        \State $e_{n,i}^{G}(t_{j}) \leftarrow G^{e}(t_{j},\boldsymbol{z}_{n}^{e})$
        \State $y_{n,i}^{G}(t_{j}) \leftarrow m_{i}(t_{j};\theta_{m}(\boldsymbol{p}_{n}^{G}))+e_{n,i}^{G}(t_{j})$
        \State $\hat{y}_{n,i}^{G}(t_{j}) \leftarrow \gamma(i,t_{j}) y_{n,i}^{o}(t_{j}) + (1-\gamma(i,t_{j}))y_{n,i}^{G}(t_{j})$
        \State $L_{D}(\theta_{G},\theta_{D}) \leftarrow -\frac{1}{N_{o}}\sum_{n=1}^{N_{o}}D(Y_{n}^{o};\theta_{D})+\frac{1}{N_{G}}\sum_{n=1}^{N_{G}}D(Y_{n}^{G};\theta_{D})+\frac{\lambda_{D}}{N_{G}}(||\nabla_{\hat{Y}_{n}^{G}}D(\hat{Y}_{n}^{G};\theta_{D})||_{2}-1)^{2}$
        \State $\theta_{D} \leftarrow \text{Adam}(L_{D}(\theta_{G},\theta_{D}),\alpha,\beta_{1},\beta_{2}))$
    \EndFor
    \State Sample latent variables $\boldsymbol{z}_{n}^{p},\boldsymbol{z}_{n} ^{e} \sim p(z)$.
    \State $\boldsymbol{p}_{n}^{G} \leftarrow G^{p}(\boldsymbol{z}_{n}^{p})$
    \State $e_{n,i}^{G}(t_{j}) \leftarrow G^{e}(t_{j},\boldsymbol{z}_{n}^{e})$
    \State $y_{n,i}^{G}(t_{j}) \leftarrow m_{i}(t_{j};\theta_{m}(\boldsymbol{p}_{n}^{G}))+e_{n,i}^{G}(t_{j})$
     \State $L_{e}(\theta_{G^{e}}) \leftarrow \frac{1}{|I||J|}\sum_{i \in I, j \in J }\frac{1}{N_{G}}\sum_{n=1}^{N_{G}}e_{n,i}^{G}(t_{j})^{2} + \frac{1}{|I||J|}\sum_{i \in I, j \in J }(\text{Var}(y_{ij}^{o}) - \text{Var}(\{e_{n,i}^{G}(t_{j})\}_{n=1}^{N_{G}}))^{2}$
    \State $L_{G}(\theta_{G},\theta_{D}) \leftarrow -\frac{1}{N_{G}}\sum_{n=1}^{N_{G}}D(Y_{n}^{G};\theta_{D})+\lambda_{e}L_{e}(\theta_{G^{e}})$
    \State $\theta_{G} \leftarrow \text{Adam}(L_{G}(\theta_{G},\theta_{D}),\alpha,\beta_{1},\beta_{2}))$
\EndWhile
\end{algorithmic}
\end{algorithm}
\section{Mathematical Analysis}\label{math_analysis}
\subsection{Convergence of SiGMoID}
This section provides a mathematical analysis of the proposed model, SiGMoID, which integrates (1) HyperPINN and (2) W-GAN. For HyperPINN, we assume that the system function $\mathbf{f}(\mathbf{y},\mathbf{p})$  defined in Equation (1) is Lipschitz continuous with respect to $\mathbf{y}$ and $\mathbf{p}$. That is, there exists two positive constants $C_1$ and $C_2$ such that
$$\|\mathbf{f}(\mathbf{y}_1,\mathbf{p})-\mathbf{f}(\mathbf{y}_2,\mathbf{p})\|\leq C_{1}\|\mathbf{y}_{1}-\mathbf{y}_2\|,$$
$$\|\mathbf{f}(\mathbf{y},\mathbf{p}_1)-\mathbf{f}(\mathbf{y},\mathbf{p}_2)\|\leq C_{2}\|\mathbf{p}_{1}-\mathbf{p}_2\|,$$
for all $\mathbf{y}_1, \mathbf{y}_2 \in \mathbb{R}^{d_y}$ and  $\mathbf{p}_1, \mathbf{p}_2 \in \mathbb{R}^{d_p}$.

Under this assumption, \textbf{Theorem A.1} in \cite{cho2024estimation} shows that the solution generated by HyperPINN closely approximates the solution to Equation (1), with an error proportional to the total loss, defined in Equation (9), for a given set of parameters $\mathbf{p}$. Furthermore, the authors also showed that trainable parameters in HyperPINN can always be found to satisfy the following inequality (\textbf{Proposition A.3} and \textbf{Proposition A.5} in \cite{cho2024estimation}):

\begin{equation}\nonumber
L_{\text{physics}}(\theta_h)<\varepsilon,
\end{equation}

for a arbitrary constant $\varepsilon>0$. From the universal approximation theorem (e.g., \textbf{Theorem 2.1} in \cite{li1996simultaneous}), the total loss function can takes arbitrary small values by adjusting the trainable parameters in HyperPINN. 

Note that \textbf{Proposition A.3} states that realization of Equation (8) can be conducted by sampling $N_p$ parameters from the distribution $\mathcal{D}$. Although such discretization, it is shown that the error introduced by this discretization becomes negligible if $N_p$ is sufficiently large.

Next, we demonstrate that the solution estimates on observation time points from W-GAN can closely approximate the true solutions as the Wasserstein distance (defined in Equation (10)) converges to zero. 

\begin{proposition}\label{gan2} Suppose that $\pi_{e}^{G},\pi_{e}^{o}$ satisfy \[ \mathbb{E}_{\mathbf{e}_{n}^{G} \sim \pi_{e}^{G}}[\{e_{n,i}^{G}(t_{j})\}_{i\in I, j \in J}] = \mathbb{E}_{\mathbf{e}_{n}^{o} \sim \pi_{e}^{o}}[\{e_{n,i}^{o}(t_{j})\}_{i\in I, j \in J}] = \mathbf{0}.\] Then, if $\pi_{p}^{G}$, $\pi_{e}^{G}$ are the desired probability density functions (pdfs) that make Wasserstein distance $d(Y^{o},Y^{G})$ equal to zero, i.e.,
\begin{align*}
\sup_{\phi \in \mathrm{Lip}_{1}}\mathbb{E}_{\mathbf{p}_{n}^{G} \sim \pi_{p}^{G}, \mathbf{e}_{n}^{G} \sim \pi_{e}^{G}}\bigg[\phi (\{y_{i}(t_{j};\mathbf{p}_{n}^{G})&+e_{n,i}^{G}(t_{j})\}_{i\in I,j\in J})\bigg]\\
& - \mathbb{E}_{\mathbf{e}_{n}^{o} \sim \pi_{e}^{o}}\bigg[\phi (\{y_{i}(t_{j};\mathbf{p}_{true})+e_{n,i}^{o}(t_{j})\}_{i\in I,j\in J}) \bigg] = 0,
\end{align*}
 then \[\mathbb{E}_{\mathbf{p}_{n}^{G} \sim \pi_{p}^{G}}[y_{i}(t_{j};\mathbf{p}_{n}^{G})] = y_{i}(t_{j};\mathbf{p}_{true})\] and \[ \mathbb{E}_{\mathbf{p}_{n}^{G} \sim \pi_{p}^{G}}[(y_{i}(t_{j};\mathbf{p}_{n}^{G})-y_{i}(t_{j};\mathbf{p}_{true}))^{2}] = \text{Var}(e_{n,i}^{o}(t_{j}))-\text{Var}(e_{n,i}^{G}(t_{j}))\] for all $i \in I$, $j \in J$.
\end{proposition}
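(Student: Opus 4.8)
The plan is to exploit the fact that the supremum in \cref{w_dist} defines a genuine metric (the Wasserstein-1 distance) on the space of probability measures with finite first moment, so that the hypothesis $d(\mu_G,\mu_o)=0$ forces $\mu_G=\mu_o$ as laws on $\mathbb{R}^{|\Omega|}$. Once distributional equality of the generated vector $\{y_i(t_j;\mathbf{p}_n^G)+e_{n,i}^G(t_j)\}_{i\in I,j\in J}$ and the observed vector $\{y_i(t_j;\mathbf{p}_{true})+e_{n,i}^o(t_j)\}_{i\in I,j\in J}$ is in hand, the two asserted identities reduce to matching the first and second marginal moments at each fixed $(i,j)$. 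I would therefore organize the argument in three steps: (i) upgrade the vanishing Wasserstein distance to equality of laws; (ii) match first moments; (iii) match second moments, using the independent sampling of $\mathbf{z}^p$ and $\mathbf{z}^e$ encoded in the product expectation $\mathbb{E}_{\mathbf{p}_n^G\sim\pi_p^G,\,\mathbf{e}_n^G\sim\pi_e^G}$.

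For step (i), since the supremum over all $1$-Lipschitz test functions equals zero and $W_1$ is a metric that separates points, the two laws coincide; equality of the joint law then yields equality of each one-dimensional marginal at $(i,j)$, which is all that is used below. Here I would record the integrability bookkeeping that places both laws in the required moment space: the parameters $\mathbf{p}_n^G$ are drawn from a distribution supported in the compact feasible region on which $\mathbf{y}(t_j;\cdot)$ is continuous (hence bounded, by the Lipschitz assumption on $\mathbf{f}$ giving continuous parameter dependence), while the noise terms are assumed to have finite variance, so all first and second moments appearing in the conclusion are finite.

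For step (ii), I fix $(i,j)$ and take expectations of the equal marginals. On the generated side, independence of the parameter and noise generators together with the zero-mean hypothesis $\mathbb{E}_{\pi_e^G}[e_{n,i}^G(t_j)]=0$ gives mean $\mathbb{E}_{\pi_p^G}[y_i(t_j;\mathbf{p}_n^G)]$; on the observed side, $y_i(t_j;\mathbf{p}_{true})$ is deterministic and $\mathbb{E}_{\pi_e^o}[e_{n,i}^o(t_j)]=0$, giving mean $y_i(t_j;\mathbf{p}_{true})$. Equating the two produces the first identity. For step (iii), I compute the variance of each marginal: independence of $\mathbf{p}_n^G$ and $\mathbf{e}_n^G$ splits the generated variance as $\mathrm{Var}(y_i(t_j;\mathbf{p}_n^G))+\mathrm{Var}(e_{n,i}^G(t_j))$, whereas the observed variance equals $\mathrm{Var}(e_{n,i}^o(t_j))$ since the deterministic term contributes nothing. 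Setting these equal and rewriting $\mathrm{Var}(y_i(t_j;\mathbf{p}_n^G))=\mathbb{E}_{\pi_p^G}[(y_i(t_j;\mathbf{p}_n^G)-y_i(t_j;\mathbf{p}_{true}))^2]$ via the mean identity of step (ii) yields the second identity.

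The step I expect to be the main obstacle is (i): the Kantorovich–Rubinstein duality in \cref{w_dist} only directly controls $1$-Lipschitz functionals, while the target conclusion involves the non-Lipschitz quadratic $x\mapsto x^2$, so one cannot simply test against $x^2$. The clean resolution is to invoke the separation property of the metric $W_1$ to obtain full equality of laws first, after which equality of every integrable moment—in particular the second—follows automatically. The only genuine care needed is the integrability verification of step (i) that places both laws in $P_2$, which is guaranteed by the compact support of $\pi_p^G$, the continuity of $\mathbf{y}(t_j;\cdot)$, and the finite-variance assumption on the noise.
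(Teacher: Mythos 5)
Your proposal is correct and follows essentially the same route as the paper: vanishing Wasserstein distance forces equality of the generated and observed laws (the paper expresses this as an identity between the mixture density $\mathbb{E}_{\mathbf{p}_{n}^{G}}[\pi_{e}^{G}(\{x_{i,j}-y_{i}(t_{j};\mathbf{p}_{n}^{G})\})]$ and $\pi_{e}^{o}(\{x_{i,j}-y_{i}(t_{j};\mathbf{p}_{true})\})$, which is your step (i) in density form), after which the two conclusions follow by matching first and second moments of the marginals using the zero-mean noise hypothesis and the independence of $\mathbf{p}_{n}^{G}$ and $\mathbf{e}_{n}^{G}$. Your variance-decomposition phrasing of step (iii) is just a cleaner repackaging of the paper's explicit integral computation, and your remark about not being able to test directly against $x\mapsto x^{2}$ correctly identifies why both arguments must first upgrade to equality of laws.
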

\begin{proof}We have that
\begin{align*}
0 &= \int\int \phi(\{y_{i}(t_{j};\mathbf{p}_{n}^{G})+e_{n,i}^{G}(t_{j})\})\pi_{p}^{G}(\mathbf{p}_{n}^{G})
\pi_{e}^{G}(\mathbf{e}_{n}^{G})d\mathbf{p}_{n}^{G}d\mathbf{e}_{n}^{G}\\
 & \quad\quad -\int \phi(\{y_{i}(t_{j};\mathbf{p}_{true})+e_{n,i}^{o}(t_{j})\})\pi_{e}^{o}(\mathbf{e}_{n}^{o})d\mathbf{e}_{n}^{o} \\ 
&= \int\int \phi(x)\pi_{p}^{G}(\mathbf{p}_{n}^{G})\pi_{e}^{G}(\{x_{i,j}-y_{i}(t_{j};\mathbf{p}_{n}^{G})\})d\mathbf{p}_{n}^{G}dx-
\int \phi(x)\pi_{e}^{o}(\{x_{i,j}-y_{i}(t_{j};\mathbf{p}_{true})\})dx \\ &=\int \phi(x)\Bigg(
\int \pi_{e}^{G}(\{x_{i,j}-y_{i}(t_{j};\mathbf{p}_{n}^{G})\})\pi_{p}^{G}(\mathbf{p}_{n}^{G})d\mathbf{p}_{n}^{G}-\pi_{e}^{o}(\{x_{i,j}-y_{i}(t_{j};\mathbf{p}_{true})\})\Bigg)dx \end{align*} for all $\phi \in \text{Lip}_{1}$
and thus we obtain 
\[\mathbb{E}_{\mathbf{p}_{n}^{G} \sim \pi_{p}^{G}}[\pi_{e}^{G}(\{x_{i,j}-y_{i}(t_{j};\mathbf{p}_{n}^{G})\})]=\pi_{e}^{o}(\{x_{i,j}-y_{i}(t_{j};\mathbf{p}_{true})\})\] for any $x_{i,j} \in \mathbb{R}$. \\
Now denote that $\pi_{i,j}^{G}$, $\pi_{i,j}^{o}$ be marginal pdfs for $e_{n,i}^{G}(t_{j})$, $e_{n,i}^{o}(t_{j})$ respectively.

From the given condition, we have that 
\begin{align*}
    \int x_{i,j}\pi_{e}^{G}(\{x_{i',j'}-y_{i'}(t_{j'};\mathbf{p}_{n}^{G})\})dx = y_{i}(t_{j};\mathbf{p}_{n}^{G}),\\ 
\int x_{i,j}\pi_{e}^{o}(\{x_{i',j'}-y_{i'}(t_{j'};\mathbf{p}_{true})\})dx = y_{i}(t_{j};\mathbf{p}_{true}).
\end{align*}
Therefore,
\begin{align*}
\mathbb{E}_{\mathbf{p}_{n}^{G} \sim \pi_{p}^{G}}[y_{i}(t_{j};\mathbf{p}_{n}^{G})] 
&= \mathbb{E}_{\mathbf{p}_{n}^{G} \sim \pi_{p}^{G}}[\int x_{i,j}\pi_{e}^{G}(\{x_{i',j'}-y_{i'}(t_{j'};\mathbf{p}_{n}^{G})\})dx] \\
&= \int x_{i,j}\pi_{e}^{o}(\{x_{i',j'}-y_{i'}(t_{j'};\mathbf{p}_{true})\})dx = y_{i}(t_{j};\mathbf{p}_{true})
\end{align*} 

and 

\begin{align*}
\mathbb{E}_{\mathbf{p}_{n}^{G} \sim \pi_{p}^{G}}&[(y_{i}(t_{j};\mathbf{p}_{n}^{G})-y_{i}(t_{j};\mathbf{p}_{true}))^{2}]\\
&= \mathbb{E}_{\mathbf{p}_{n}^{G} \sim \pi_{p}^{G}}[y_{i}(t_{j};\mathbf{p}_{n}^{G})^{2}]-y_{i}(t_{j};\mathbf{p}_{true})^{2} \\
&= \mathbb{E}_{\mathbf{p}_{n}^{G} \sim \pi_{p}^{G}}[\int x_{i,j}^{2}\pi_{e}^{G}(\{x_{i',j'}-y_{i'}(t_{j'};\mathbf{p}_{n}^{G})\})dx - \text{Var}(e_{n,i}^{G}(t_{j}))]-y_{i}(t_{j};\mathbf{p}_{true})^{2} \\ 
&= (\int x_{i,j}^{2}\pi_{e}^{o}(\{x_{i',j'}-y_{i'}(t_{j'};\mathbf{p}_{true})\})dx - y_{i}(t_{j};\mathbf{p}_{true})^{2}) - \text{Var}(e_{n,i}^{G}(t_{j})) \\ 
&= \text{Var}(e_{n,i}^{o}(t_{j})) - \text{Var}(e_{n,i}^{G}(t_{j}))
\end{align*}

for each $i,j$.
\end{proof}

Notice that \cref{gan2} assumes that the mean of the generated data noise $\mathbf{e}^{G}$ is zero. \cref{gan2} implies that if the Wasserstein distance $d(Y^{o},Y^{G})$ between inferred and true trajectories successfully converges to zero, then the variance of the generated data noise $\mathbf{e}^{G}$ must not exceed the variance of the true data noise $\mathbf{e}^{o}$. Furthermore, if the variance of the generated data noise $\mathbf{e}^{G}$ can be simultaneously controlled while the Wasserstein loss is driven to zero, the following corollary can be immediately established, asserting that SiGMoID can accurately approximate the true solutions.

\begin{corollary}\label{gan3} Assuming that all conditions in \cref{gan2} are satisfied, if we additionally impose the condition $\displaystyle \text{Var}(e_{n,i}^{G}(t_{j}))=\text{Var}(e_{n,i}^{o}(t_{j}))$ for all $i \in I$, $j \in J$, then we have that \[\displaystyle y_{i}(t_{j};\mathbf{p}_{n}^{G}) = y_{i}(t_{j};\mathbf{p}_{true})\] for all $i \in I$, $j \in J$.
\end{corollary}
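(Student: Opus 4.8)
The plan is to read off the corollary as an immediate specialization of the second-moment identity already established in \cref{gan2}. First I would invoke \cref{gan2} under its standing hypotheses (zero-mean generated and observed noise together with vanishing Wasserstein distance), which are assumed to hold here as well, to obtain, for each $i \in I$, $j \in J$,
\[
\mathbb{E}_{\mathbf{p}_{n}^{G} \sim \pi_{p}^{G}}[(y_{i}(t_{j};\mathbf{p}_{n}^{G})-y_{i}(t_{j};\mathbf{p}_{true}))^{2}] = \text{Var}(e_{n,i}^{o}(t_{j}))-\text{Var}(e_{n,i}^{G}(t_{j})).
\]
Next I would substitute the additional hypothesis of the corollary, $\text{Var}(e_{n,i}^{G}(t_{j}))=\text{Var}(e_{n,i}^{o}(t_{j}))$, which forces the right-hand side to vanish and leaves
\[
\mathbb{E}_{\mathbf{p}_{n}^{G} \sim \pi_{p}^{G}}[(y_{i}(t_{j};\mathbf{p}_{n}^{G})-y_{i}(t_{j};\mathbf{p}_{true}))^{2}] = 0.
\]

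The key observation is then purely elementary: the integrand $(y_{i}(t_{j};\mathbf{p}_{n}^{G})-y_{i}(t_{j};\mathbf{p}_{true}))^{2}$ is a nonnegative function of $\mathbf{p}_{n}^{G}$, so its expectation against $\pi_{p}^{G}$ can vanish only if the integrand itself is zero $\pi_{p}^{G}$-almost everywhere. Hence $y_{i}(t_{j};\mathbf{p}_{n}^{G}) = y_{i}(t_{j};\mathbf{p}_{true})$ for $\pi_{p}^{G}$-almost every $\mathbf{p}_{n}^{G}$ and for every $i \in I$, $j \in J$, which is precisely the claimed identity.

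I do not expect any substantial obstacle: the argument is essentially a one-line consequence of the fact that a nonnegative random variable with zero mean is almost surely zero. The only point requiring a little care is the distinction between equality holding for every parameter value and holding merely $\pi_{p}^{G}$-almost surely; I would therefore state the conclusion as an almost-sure identity and remark that the exceptional set on which it could fail carries $\pi_{p}^{G}$-measure zero, so it contributes nothing to any expectation taken against $\pi_{p}^{G}$. All the genuinely analytical content—the change of variables, the deconvolution identity tying $\pi_{e}^{G}$, $\pi_{p}^{G}$, and $\pi_{e}^{o}$ together, and the first- and second-moment computations—has already been discharged in \cref{gan2}, so the corollary simply reads its conclusion in the variance-matched regime.
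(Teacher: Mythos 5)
Your argument is correct and is precisely the ``immediate'' derivation the paper intends: specialize the second-moment identity of \cref{gan2} to the variance-matched case, observe the expectation of a nonnegative quantity vanishes, and conclude the integrand is zero $\pi_{p}^{G}$-almost everywhere. Your added remark that the conclusion holds only for $\pi_{p}^{G}$-almost every $\mathbf{p}_{n}^{G}$ is a correct refinement of the statement as written, which leaves that quantifier implicit.
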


\subsection{Identifiability of dynamic systems}
Identifiability concerns whether a unique set of parameters can be recovered from the observed data of a dynamical system. Intuitively, if multiple parameter sets can generate exactly the same data, then the underlying system cannot be uniquely determined from those data. Therefore, identifiability analysis ensures that fitting a unique dynamical system to the given data is possible. Using the following definition and method, we demonstrate that the four dynamical systems considered in this manuscript (FitzHugh–Nagumo model, protein transduction model, \textit{Hes1} model, Lorenz system) are non-identifiable for the given  NSMC data.

\begin{definition} (Structural non-identifiability)
Let let $\mathbf{y}(t)=\mathbf{y}(t,\mathbf{p})$ be the model output (observable) in \cref{general_ode} as a function of parameters $\mathbf{p}\in\mathbb{R}^{d_p}$. A parameter $\mathbf{p}$ is said to be \emph{structurally non-identifiable} if there exists a parameter $\mathbf{p}' \ne \hat{\mathbf{p}}$ such that $$\mathbf{y}(t, \mathbf{p}') = \mathbf{y}(t, \hat{\mathbf{p}}) \quad \text{for all } t \in [0, T],$$

where $\hat{\mathbf{p}}$ denotes the (unknown) true parameter. Equivalently, the model output is invariant on the set $\mathcal{I} := \left\{ \mathbf{p} \in \mathbb{R}^{d_p} \;\middle|\; \mathbf{y}(t, \mathbf{p}) = \mathbf{y}(t, \hat{\mathbf{p}}) \;\; \forall t \in [0,T] \right\}$, indicating that $\mathbf{p}$ cannot be uniquely identified from noise-free observations of $\mathbf{y}(t)$.

\end{definition}

While structural non-identifiability can be analyzed rigorously, such approaches are typically restricted to relatively small or simplified (linear with respect to parameters) models due to their high computational complexity (see also \cite{raue2009structural}). One widely used approach is \emph{profile likelihood}, which evaluates the uncertainty of each parameter $p_i$ by fixing its value and optimizing the remaining parameters $\mathbf{p}_{-i}=\{p_1, \dots, p_{i-1}, p_{i+1}, \dots, p_{d_p}\}$ with respect to the objective function $L$ (e.g., the negative log-likelihood or residual sum of squares). Then, the profile likelihood for $p_i$ is defined as
$$\mathcal{L}_{\text{PL}}(p_i) := \min_{\mathbf{p}_{-i}} L(p_i, \mathbf{p}_{-i}).$$
Under this profile likelihood, we define the practical non-identifiability as below: 

\begin{definition} (Practical non-identifiability) A parameter estimate $\hat{p}_i$ is said to be \emph{practically non-identifiable} if the corresponding profile likelihood $\mathcal{L}_{\text{PL}}(p_i)$ remains approximately constant in one or both directions away from $\hat{p}_i$.
\end{definition}
This means the confidence interval, defined in \cref{CI}, is unbounded:

\begin{equation}\label{CI}
    \mathcal{I}_{\alpha}(p_i) := \left\{ p_i \in \mathbb{R} \;\middle|\; \mathcal{L}_{\text{PL}}(p_i) - \min_{\boldsymbol{p} \in \mathbb{R}^{d_p}} L(\boldsymbol{p}) \leq \Delta_\alpha \right\}
\end{equation}
where $\Delta_\alpha$ denotes the confidence threshold corresponding to the desired significance level $\alpha$. For example, $\Delta_{0.95} = 3.84$ for a 95\% confidence interval with one degree of freedom, following the chi-squared distribution. In this section, we defined the objective function $L$ as follows:
$$ L(\mathbf{p})=\frac{1}{|S_o|}\frac{1}{N_o}\sum_{i\in S_o}\sum_{j=1}^{N_o}|y_i^{o}(t_j)-y_i(t_j,\mathbf{p})|^2,$$
where $\{t_j\}$ denotes $N_o$ observation time points, and $S_o$ denotes the set of observable state indices. We compute the profile likelihoods, $\mathcal{L}_{\text{PL}}(p_i)$, using SciPy's \textit{solve\_ivp} to evaluate the model solution $\mathbf{y}$, and \textit{minimize} to derive $\mathcal{L}_{\text{PL}}(p_i
)$ from $ L(\mathbf{p})$. The resulting values are shown in \cref{fig:FN_CI}–\cref{fig:Lorenz_CI}, under the same experimental conditions described in the Experiments section:

\begin{figure}[h]

\vskip 0.1in
\begin{center}
\centerline{\includegraphics[width=0.9\columnwidth]{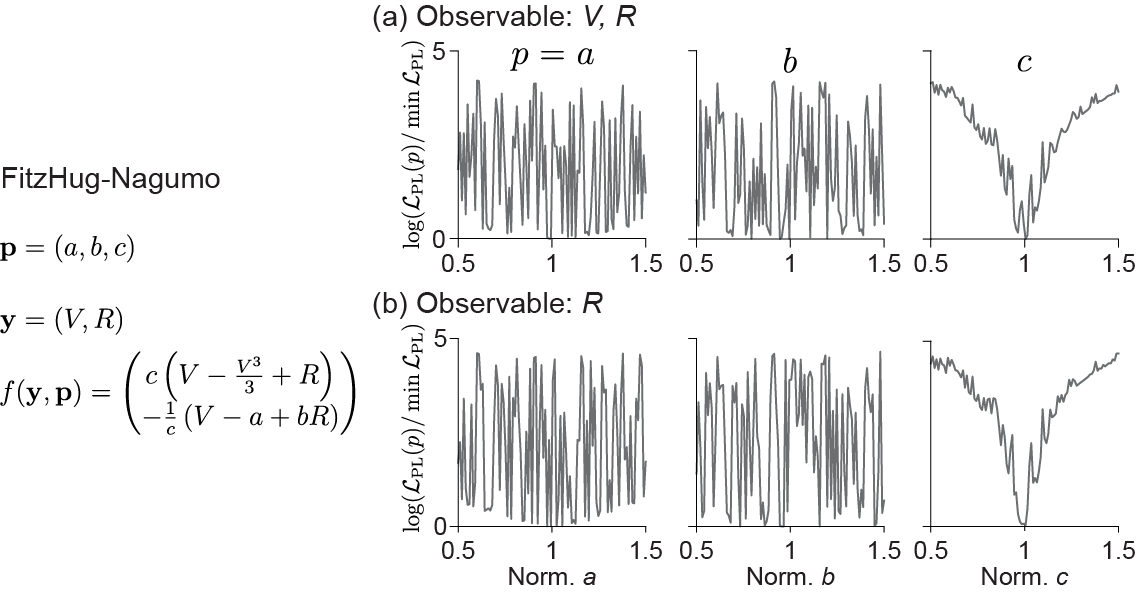}}
\caption{\textbf{Profile likelihoods $\mathcal{L}_{\textit{PL}}$ for FitzHug-Nagumo model.} Each panel displays values of $\mathcal{L}_{\textit{PL}}$ for a single parameter: $a$ (left), $b$ (middle), and $c$ (right) in FitzHug-Nagumo model (a-b). When two variables $V$ and $R$ are observable, $\mathcal{L}_{\textit{PL}}$ exhibits oscillatory behavior, although it still attains its minimum at the true parameter value (a, left). The parameter range on the x-axis is normalized by dividing by the true parameter value (centered at $a^{true}=0.3$, and the y-axis shows the scaled profile likelihood, obtained by dividing by $\min{}\mathcal{L}_{\textit{PL}}=\min_{a\in[0.5\times a^{true},1.5\times a^{true}]}{\mathcal{L}_{\textit{PL}}(a)}$, followed by taking the natural logarithm.
For parameter $b$, the global minimum cannot be determined because the minimum value of $\mathcal{L}_{\textit{PL}}$ is not uniquely attained (a, middle), indicating a non-identifiability issue. In contrast, parameter $c$ exhibit a clear global minimum at the center, implying that c is identifiable. When only $V$ is observable, the identifiability patterns of all three parameters remain similar to those obtained when both $V$ and $R$ are observed (b. left, middle, and right), indicating that observing $R$ does not resolve the identifiability issues.}
\label{fig:FN_CI}
\end{center}
\end{figure}

\begin{figure}[h!]

\vskip 0.1in
\begin{center}
\centerline{\includegraphics[width=\columnwidth]{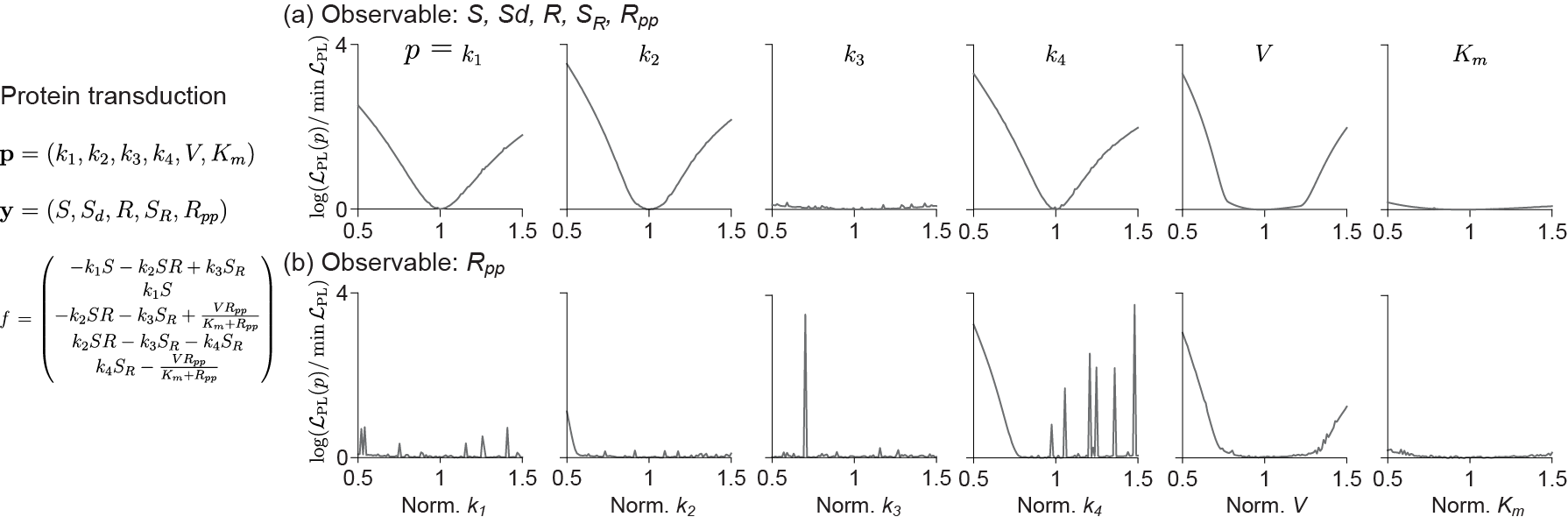}}
\caption{\textbf{Profile likelihoods for protein transduction model.} Each panel displays values of $\mathcal{L}_{\textit{PL}}$ for a single parameter $k_{1}$, $k_{2}$, $k_{3}$, $k_{4}$, $V$ and $K_{m}$ (from left to right) in protein transduction model (a-b). (a) When all variables are observable, the scaled values of $\mathcal{L}_{\textit{PL}}$ for $k_{1},k_{2},k_{4}$ and $V$ exhibit a clear global minimum at the center, whereas those for $k_{3}$ and $K_{m}$ remain nearly flat, indicating that these parameters are non-identifiable. (b) When only $R_{pp}$ is observed, all parameters except $V$  fail to exhibit a unique global minimum, indicating non-identifiability of all parameters.}
\label{fig:Protein_CI}
\end{center}
\end{figure}

\begin{figure}[h!]

\vskip 0.1in
\begin{center}
\centerline{\includegraphics[width=.8\columnwidth]{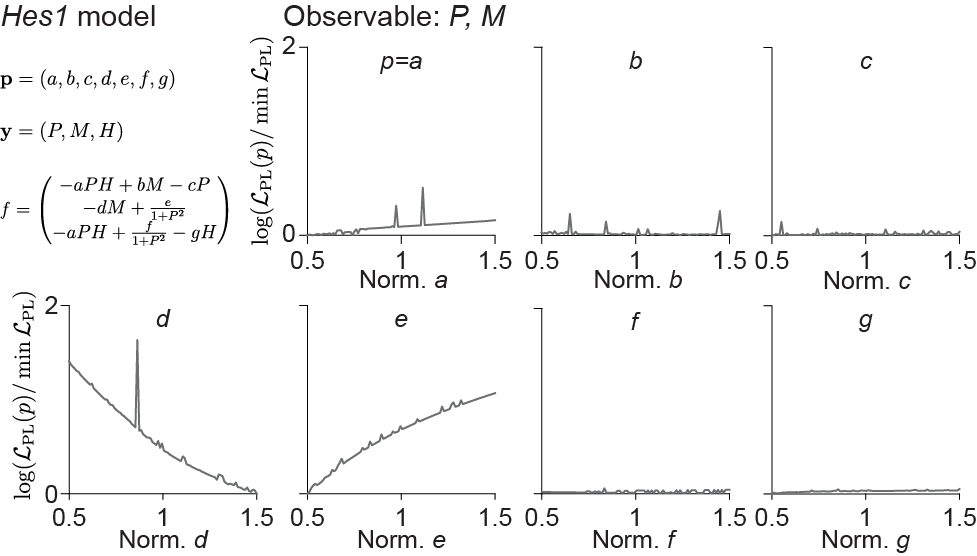}}
\caption{\textbf{Profile likelihoods for \textit{Hes1} model.} Each panel displays values of $\mathcal{L}_{\textit{PL}}$ for a single parameter $a$ to $g$ (from the left top to the right bottom) in \textit{Hes1} model. The scaled values of $\mathcal{L}_{\textit{PL}}$ for $b$, $c$, $f$ and $g$ remain nearly flat, indicating non-identifiability (right side). For parameters $a$, $d$, and $e$, the profile likelihood attains its minimum at the boundary of the parameter range rather than at the true value (center of the x-axis). Specifically, due to the presence of noise, the global minimum for parameters $a$, $d$, and $e$ is shifted, reflecting biased or distorted information about these parameters. Therefore, all parameters exhibit practical non-identifiability or indicate that the estimation problem is ill-posed within the restricted parameter domain.}
\label{fig:Hes1_CI}
\end{center}
\end{figure}

\begin{figure}[h!]

\vskip 0.1in
\begin{center}
\centerline{\includegraphics[width=\columnwidth]{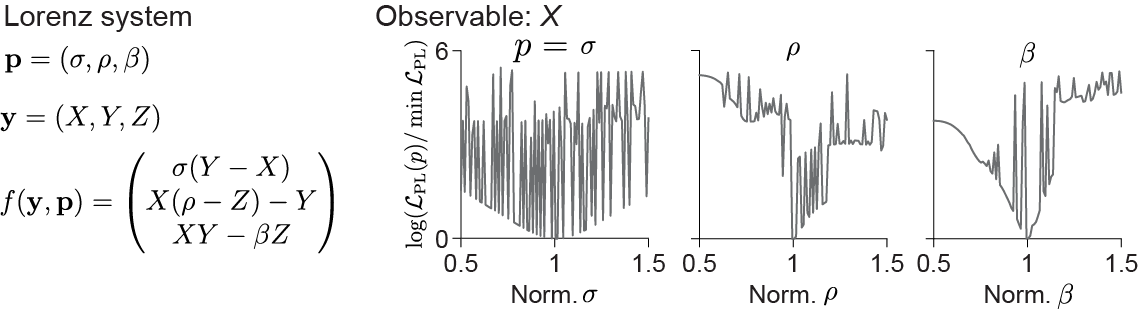}}
\caption{\textbf{Profile likelihoods for Lorenz system.} Each panel displays values of $\mathcal{L}_{\textit{PL}}$ for a single parameter $\sigma$ (left), $\rho$ (middle), and $\beta$ (right) in Lorenz system. All parameters exhibit a clear global minimum in the scaled values of $\mathcal{L}_{\textit{PL}}$, indicating that a single observation of $X$ is sufficient to ensure identifiability. However, noticeable fluctuations and the presence of relatively flat local minima ($b,c$) away from the true value also suggest limited practical identifiability.}
\label{fig:Lorenz_CI}
\end{center}
\end{figure}
\clearpage
\newpage
\section{Supplementary Tables}\label{Supp_tables}
For clarity and brevity, the following abbreviations are used throughout the Supplementary Tables: FitzHug-Nagumo (FN), Protein Transduction (P. TRANS.), \textit{Hes1} model (\textit{Hes1}), and Lorenz system (LORENZ). Note that all network units have fully-connected layers. 

\begin{table}[ht!]
\caption{Network settings and simulation data settings used for training HyperPINN.}
\label{table:network settings:hyperpinn}
\vskip 0.15in
\begin{center}
\begin{small}
\begin{sc}
\begin{tabularx}{\textwidth}{l *{4}{>{\centering\arraybackslash}X}}
\toprule
Setting & FN & P. Trans. & \textit{Hes1}  & Lorenz \\
\midrule
optimizer & \multicolumn{4}{c}{Adam} \\
\midrule
learning rate & $5\times10^{-4}$ & $10^{-5}$ & $10^{-4}$ & $10^{-4}$ \\
\midrule
batch size & \multicolumn{4}{c}{$10^{4}$} \\
\midrule
$N_{p}$ & $10^{3}$ & $10^{3}$ & $5\times10^{3}$ & $2\times10^{3}$ \\
\midrule
$\alpha$, $\beta$ & $1$, $0.001$ & $1$, $0$ & $1$, $0$ & $1$, $0$ \\
\midrule
training epochs & \multicolumn{4}{c}{$3\times 10^{4}$} \\
\midrule
Last activation function & sin & tanh & sin & elu \\
\bottomrule
\end{tabularx}
\end{sc}
\end{small}
\end{center}
\end{table}

\begin{table}[h]
\caption{Network settings used for training both the generator and discriminator of W-GAN.}
\label{table:network settings:wgan} \vskip 0.15in \begin{center}
\begin{small}
\begin{sc}
\begin{tabularx}{\textwidth}{l *{4}{>{\centering\arraybackslash}X}} \toprule Setting & FN & P. Trans. & \textit{Hes1} & Lorenz \\
\midrule optimizer & \multicolumn{4}{c}{Adam($\beta_{1}=0.0$, $\beta_{2}=0.9$)}\\ \midrule learning rate & $10^{-5}$ & $10^{-5}$& $5 \times 10^{-5}$ & $10^{-5}$\\ \midrule batch size & \multicolumn{4}{c}{full-batch}\\ \midrule $\lambda_{e}$ & $10^{2}$ &  $10^{4}$ &  $1$ &  $10^{2}$ \\\midrule noise dimension & \multicolumn{4}{c}{$32$} \\ \midrule training epochs & \multicolumn{4}{c}{$10^{5}$} \\ 
\bottomrule\end{tabularx} \end{sc} \end{small} \end{center} 
\end{table}

\begin{table}[ht!]
\caption{Parameter estimation results for the FN system obtained from four methods: SiGMoID, MAGI, FGPGM, and AGM. Numerical values in parentheses denote the true parameter values used to generate the NS and NSMC datasets. All methods report parameter estimates in the form of mean ± standard deviation; for SiGMoID, these statistics are computed over three repeated runs. In the NS setting, the method whose estimates most closely match the true parameter values is highlighted in \textbf{bold}.}
\label{table:params_FN}
\vskip 0.15in
\begin{center}
\begin{small}
\begin{sc}
\begin{tabularx}{\textwidth}{l *{4}{>{\centering\arraybackslash}X}}
\toprule
Data & Method & \makecell{$a$ \\ ($0.20$)} & \makecell{$b$ \\ ($0.20$)} & \makecell{$c$ \\ ($3.00$)} \\
\midrule
\multirow{4.2}{*}{NS}  & SiGMoID    & $\mathbf{0.20 \pm 0.00}$ & $\mathbf{0.19+0.01}$ & $\mathbf{2.99+0.00}$ \\
& MAGI & $0.19 \pm 0.02$ & $0.35 \pm 0.09$ & $2.89 \pm 0.06$ \\
& FGPGM    & $0.22 \pm 0.04$ & $0.32 \pm 0.13$ & $2.85 \pm 0.15$ \\
& AGM    & $0.30 \pm 0.03$ & $0.36 \pm 0.06$ & $2.04 \pm 0.14$ \\
\midrule
NSMC & SiGMoID    & $0.20\pm 0.00$ & $0.21+0.00$ & $2.99+0.00$  \\
\bottomrule
\end{tabularx}
\end{sc}
\end{small}
\end{center}
\end{table}

\begin{table}[ht!]
\caption{Parameter estimation results for the protein transduction system, comparing the mean values generated by SiGMoID for the NS and NSMC datasets.}
\label{table:params_pt}
\vskip 0.15in
\begin{center}
\begin{small}
\begin{sc}
\begin{tabularx}{\textwidth}{l *{7}{>{\centering\arraybackslash}X}}
\toprule
Data & Method & \makecell{$k_{1}$\\$(0.070)$} & \makecell{$k_{2}$\\$(0.600)$} & \makecell{$k_{3}$\\$(0.050)$} & \makecell{$k_{4}$\\$(0.300)$} & \makecell{$V$\\$(0.017)$} & \makecell{$K_{m}$\\$(0.300)$} \\
\midrule
NS & SiGMoID & $0.070\pm0.000$ & $0.604\pm0.001$ & $0.052\pm0.001$ & $0.300\pm0.000$ & $0.017\pm0.000$ & $0.315\pm0.001$ \\
\midrule
NSMC & SiGMoID & $0.088\pm0.001$ & $0.607\pm0.001$ & $0.011\pm0.002$ & $0.289\pm0.000$ & $0.015\pm0.000$ & $0.255\pm0.003$\\
\bottomrule
\end{tabularx}
\end{sc}
\end{small}
\end{center}
\end{table}

\begin{table}[ht!]
\caption{Parameter estimation results in the \textit{Hes1} system, comparing the mean values of generation from each four methods. Notably, the temporal dynamics of the unobservable component ($H$) are only determined by $P$, $H$, and three parameters $a, f$, and $g$ (the third equation in Equation (3)). In contrast, the other four parameters ($b, c, d$, and $e$) determine the true trajectories of $P$ and $M$, but various combinations of these four parameters can also reproduce the true $P$ and $M$, making their estimation non-unique. Therefore, accurately estimating $a, f$, and $g$ is sufficient to infer the true trajectory of $H$.}
\label{table:params_hes1} \vskip 0.15in \begin{center}
\begin{small}
\begin{sc}
\begin{tabularx}{\textwidth}{l *{7}{>{\centering\arraybackslash}X}} \toprule Method & $a$ $(0.022)$ & $b$ $(0.300)$ & $c$ $(0.031)$ & $d$ $(0.028)$ & $e$ $(0.500)$ & $f$ $(20.000)$ & $g$ $(0.300)$  \\ \midrule SiGMoID & \makecell{$0.031$\\$\pm0.001$} & \makecell{$0.310$\\$\pm0.007$} & \makecell{$0.381$\\$\pm0.000$} & \makecell{$0.025$\\$\pm0.000$}  & \makecell{$0.399$\\$\pm0.016$} & \makecell{$\mathbf{14.631}$\\$\mathbf{\pm0.638}$} & \makecell{$\mathbf{0.321}$\\$\mathbf{\pm0.014}$} \\ MAGI & \makecell{$\mathbf{0.021}$\\$\pm\mathbf{0.003}$}& \makecell{$0.329$\\$\pm0.051$} & \makecell{$0.035$\\$\pm0.006$} & \makecell{$0.029$\\$\pm0.002$}  & \makecell{$0.552$\\$\pm0.074$} & \makecell{$13.759$\\$\pm3.026$} & \makecell{$0.141$\\$\pm0.026$} \\ Ramsay et al. (2007) & \makecell{$0.027$\\$\pm0.026$} & \makecell{$\mathbf{0.302}$\\$\mathbf{\pm0.086}$} & \makecell{$\mathbf{0.031}$\\$\mathbf{\pm0.010}$} & \makecell{$\mathbf{0.028}$\\$\mathbf{\pm0.003}$}  & \makecell{$\mathbf{0.498}$\\$\pm0.088$} & \makecell{$604.9$\\$\pm5084.8$} & \makecell{$1.442$\\$\pm9.452$} \\ \bottomrule \end{tabularx} \end{sc} \end{small} \end{center} 
\end{table}

\begin{table}[ht!]
\caption{Parameter estimation results for the Lorenz system, presenting the mean values generated by SiGMoID.}
\label{table:params_lorenz} \vskip 0.15in \begin{center}
\begin{small}
\begin{sc}
\begin{tabularx}{\textwidth}{l *{3}{>{\centering\arraybackslash}X}}\toprule Method & \makecell{$\sigma$\\$(10.00)$} & \makecell{$\rho$\\$(28.00)$} & \makecell{$\beta$\\$(2.67)$} \\ \midrule SiGMoID & $10.60\pm0.003$ & $27.69\pm0.044$ & $2.65\pm0.035$ \\ \bottomrule \end{tabularx} \end{sc} \end{small} \end{center} 
\end{table}

\begin{table}[ht!]
\caption{\textbf{Computational cost.} Runtime results (mean $\pm$ standard deviation (sec)) for all experiments performed using SiGMoID. The GAN terminates training once the approximated Wasserstein distance reaches its minimum within 100,000 epochs. All neural network training is conducted on an NVIDIA RTX A5000 GPU.}
\label{table:runtime} \vskip 0.15in 
\begin{center}
\begin{small}
\begin{sc}
\begin{tabularx}{\textwidth}{l *{4}{>{\centering\arraybackslash}X}}
\toprule
Setting & FN & P. Trans. & \textit{Hes1}  & Lorenz \\
\midrule
Training time (sec) & $6757.4\pm42.6$ & $2525.4\pm23.9$ & $3658.9\pm45.2$ & $2521.2\pm6.4$ \\
\bottomrule
\end{tabularx}
\end{sc}
\end{small}
\end{center}
\end{table}

\end{document}